\title{A Trichotomy for Transductive Online Learning}
\author{%
  Steve Hanneke \\ 
  Department of Computer Science \\
  Purdue University \\
  \texttt{steve.hanneke@gmail.com}
  \And
  Shay Moran \\
  Faculty of Mathematics, \\
  Faculty of Computer Science, and \\
  Faculty of Data and Decision Sciences\\
  Technion -- Israel Institute of Technology\\
  \texttt{smoran@technion.ac.il} \\
  \And
  Jonathan Shafer \\
  Computer Science Division\\
  UC Berkeley \\
  \texttt{shaferjo@berkeley.edu} \\
}
\begin{document}

\maketitle

\begin{abstract}
    We present new upper and lower bounds on the number of learner mistakes in the `transductive' online learning setting of Ben-David, Kushilevitz and Mansour (1997).
    This setting is similar to standard online learning, except that the adversary fixes a sequence of instances $x_1,\dots,x_n$ to be labeled at the start of the game, and this sequence is known to the learner.
    Qualitatively, we prove a \emph{trichotomy}, stating that the minimal number of mistakes made by the learner as $n$ grows can take only one of precisely three possible values: $n$, $\Theta\left(\log (n)\right)$, or $\Theta(1)$.
    Furthermore, this behavior is determined by a combination of the VC dimension and the Littlestone dimension.
    Quantitatively, we show a variety of bounds relating the number of mistakes to well-known combinatorial dimensions.
    In particular, we improve the known lower bound on the constant in the $\Theta(1)$ case from $\Omega\left(\sqrt{\log(d)}\right)$ to $\Omega(\log(d))$ where $d$ is the Littlestone dimension.
    Finally, we extend our results to cover multiclass classification and the agnostic setting.
\end{abstract}

\newpage

\tableofcontents

\newpage

\section{Introduction}\label{section:introduction}

In classification tasks like PAC learning and online learning, the learner simultaneously confronts two distinct types of uncertainty: \emph{labeling-related} uncertainty regarding the best labeling function, and \emph{instance-related} uncertainty regarding the instances that the learner will be required to classify in the future.
To gain insight into the role played by each type of uncertainty, researchers have studied modified classification tasks in which the learner faces only one type of uncertainty, while the other type has been removed.

In the context of PAC learning, \cite{DBLP:conf/alt/Ben-DavidB11} studied a variant of proper PAC learning in which the true labeling function is known to the learner, and only the distribution over the instances is not known.
They show bounds on the sample complexity in this setting, which conceptually quantify the instance-related uncertainty.
Conversely, labeling-related uncertainty is captured by PAC learning with respect to a fixed (e.g., uniform) domain distribution \cite{DBLP:journals/tcs/BenedekI91}, a setting which has been studied extensively.

In this paper we improve upon the work of \cite{DBLP:journals/ml/Ben-DavidKM97}, who quantified the label-related uncertainty in online learning.
They introduced a model of \emph{transductive online learning},\footnote{
    \cite{DBLP:journals/ml/Ben-DavidKM97} call their model `off-line learning with the worst sequence', but in this paper we opt for `transductive online learning', a name that has appeared in a number of publications, including  \cite{DBLP:conf/nips/KakadeK05,DBLP:phd/il/Pechyony08,DBLP:conf/birthday/Cesa-BianchiS13,DBLP:conf/icml/SyrgkanisKS16}.
    We remark there are at least two different variants referred to in the literature as `transductive online learning'. For example, \cite{DBLP:conf/icml/SyrgkanisKS16} write of ``a transductive setting \cite{DBLP:journals/ml/Ben-DavidKM97} in which the learner knows the arriving contexts a priori, or, less stringently, knows only the set, but not necessarily the actual sequence or multiplicity with which each context arrives.'' That is, in one setting, the learner knows the sequence $(x_1,\dots,x_n)$ in advance, but in another setting the learner only knows the set $\{x_1,\dots,x_n\}$. One could distinguish between these two settings by calling them `sequence-transductive' and `set-transductive', respectively. Seeing as the current paper deals exclusively with the sequence-transductive setting, we refer to it herein simply as the `transductive' setting.   
} 
in which the adversary commits in advance to a specific sequence of instances, thereby eliminating the instance-related uncertainty.

\subsection{The Transductive Online Learning Model}
\label{section:transductive-setting}

The model of learning studied in this paper, due to \cite{DBLP:journals/ml/Ben-DavidKM97}, is a zero-sum, finite, complete-information, sequential game involving two players, the \emph{learner} and the \emph{adversary}.
Let $n \in \bbN$, let $\cX$ and $\cY$ be sets, and let $\cH \subseteq \cY^\cX$ be a collection of functions called the \emph{hypothesis class}.

The game proceeds as follows (see \cref{section:preliminaries} for further formal definitions). First, the adversary selects an arbitrary sequence of \emph{instances}, $x_1,\dots,x_n \in \cX$. Then, for each $t=1,\dots,n$:
\begin{enumerate}
    \item{
        The learner selects a \emph{prediction}, $\hat{y}_t \in \cY$.
    }
    \item{
        The adversary selects a \emph{label}, $y_t \in \cY$. 
    }
\end{enumerate}

In each step $t \in [n]$, the adversary must select a label $y_t$ such that
the sequence $(x_1,y_1),\dots,(x_t,y_t)$ is \emph{realizable} by $\cH$, meaning that there exists $h \in \cH$ satisfying $h(x_i) = y_i$ for all $i \in [t]$.
The learner's objective is to minimize the quantity
%
\[
    \mistakes{A, x, h} = \left|\left\{t \in [n]: ~ \hat{y}_t \neq h(x_t)\right\}\right|,
\]
which is the number of mistakes when the learner plays strategy $A$ and the adversary chooses sequence $x\in \cX^n$ and labels consistent with hypothesis $h \in \cH$.
We are interested in understanding the value of this game,
\[
    \mistakes{\cH,n} = \inf_{A \in \cA}
    ~
    \sup_{x\in \cX^n}
    ~
    \sup_{h \in \cH}
    ~
    \mistakes{A, x, h},
\]
where $\cA$ is the set of all learner strategies. Note that neither party can benefit from using  randomness, so without loss of generality we consider only deterministic strategies.

\subsection{A Motivating Example}

Transductive predictions of the type studied in this paper appear in many real-world situations, essentially in any case where a party has a schedule or a to-do list known in advance of specific tasks that need to be completed in order, and there is some uncertainly as to the precise conditions that will arise in each task. As a concrete example, consider the logistics that the management of an airport faces when scheduling the work of passport-control officers. 

\begin{example}
    An airport knows in advance what flights are scheduled for each day. However, it does not know in advance exactly how many passengers will go through passport control each day (because tickets can be booked and cancelled in the last minute, and entire flights can be cancelled, delayed or rerouted, etc.). Each day can be `regular' (the number of passengers is normal), or it can be `busy' (more passengers than usual). Correspondingly, each day the airport must decide whether to schedule a `regular shift' of passport-control officers, or an `extended shift' that contains more officers. 

    If the airport assigns a standard shift for a  busy day, then passengers experience long lines at passport control, and the airport suffers a loss of $1$; if the airport assigns an extended shift for a regular day, then it wastes money on excess manpower, and it again experiences a loss of $1$; If the airport assigns a regular shift to a regular day, or an extended shift to a busy day, then it experiences a loss of~$0$.
\end{example}

Hence, when the airport schedules its staff, it is essentially attempting to predict for each day whether it will be a regular day or a busy day, using information it knows well in advance about which flights are scheduled for each day. This is precisely a transductive online learning problem.

\subsection{Our Contributions}
%
%
\begin{enumerate}[label=\textbf{\Roman*.}]
    \item{
        \textbf{Trichotomy.}
        We show the following \emph{trichotomy}. It shows that the rate at which $\mistakes{\cH,n}$ grows as a function of $n$ is determined by the $\mathsf{VC}$ dimension and the Littlestone dimension ($\mathsf{LD}$).

        \vspace*{-1.5em}
        \begin{adjustwidth}{1em}{0em}
            \begin{theorem*}[\textbf{Informal Version of \cref{theorem:trichotomy}}]
                Every hypothesis class $\cH \subseteq \{0,1\}^\cX$ satisfies precisely one of the following:
                \begin{enumerate}[label=\arabic*.]
                    \item{
                        $\mistakes{\cH,n} = n$. This happens if $\VC{\cH} = \infty$.
                    }
                    \item{\label{item:logarithmic-informal-trichotomy}
                        $\mistakes{\cH,n} = \ThetaOf{\log(n)}$. This happens if $\VC{\cH} < \infty$ and $\LD{\cH} = \infty$.
                    }
                    \item{\label{item:constant-informal-trichotomy}
                        $\mistakes{\cH,n} = \ThetaOf{1}$. This happens if $\LD{\cH} < \infty$.
                    }
                \end{enumerate}
                The $\Theta(\cdot)$ notations in \cref{item:logarithmic-informal-trichotomy,item:constant-informal-trichotomy} hide a dependence on $\VC{\cH}$, and $\LD{\cH}$, respectively.
            \end{theorem*}
        \end{adjustwidth}
        
        The proof uses bounds on the number of mistakes in terms of the \emph{threshold dimension} (\cref{section:td-bounds}), among other tools.
    }
    \vsp
    \item{
        \textbf{Littlestone classes.}
        The minimal constant upper bound in the $\ThetaOf{1}$ case of \cref{theorem:trichotomy} is some value $C(\cH)$ that depends on the class $\cH$, but the precise mapping $\cH \mapsto C(\cH)$ is not known in general. \cite{DBLP:journals/ml/Ben-DavidKM97} showed that $C(\cH) = \OmegaOf{\sqrt{\log(\LD{\cH})}}$.
        In \cref{section:qualitative-bounds,section:proof-of-log-ld-lower-bound} we improve upon their result as follows.
        \vspace*{-1.5em}
        \begin{adjustwidth}{1em}{0em}
            \begin{theorem*}[\textbf{Informal Version of \cref{theorem:ld-lower-bound}}]
                Let $\cH \subseteq \{0,1\}^\cX$ such that $\LD{\cH} = d < \infty$. Then $\mistakes{\cH,n} = \OmegaOf{\log(d)}$.
            \end{theorem*}
        \end{adjustwidth}
    }
    \vsp
    \item{
        \textbf{Multiclass setting.}
        In \cref{section:multiclass}, we generalize \cref{theorem:trichotomy} to the multiclass setting with a finite label set $\cY$, showing a trichotomy based on the Natarajan dimension. The proof uses a simple result from Ramsey theory, among other tools.

        Additionally, we show that the DS dimension of \cite{DBLP:conf/colt/DanielyS14} does not characterize multiclass transductive online learning. 
    }
    \vsp
    \item{
        \textbf{Agnostic setting.}
        In the \emph{standard} (non-transductive) agnostic online setting, \cite{DBLP:conf/colt/Ben-DavidPS09} showed that $\regretStandard{\cH, n}$, the agnostic online regret for a hypothesis class $\cH$ for a sequence of length $n$ satisfies
        \begin{equation}
            \label{eq:standard-agnostic-bound}
            \OmegaOf{\sqrt{\LD{\cH} \cdot n}} \leq \regretStandard{\cH, n} \leq \BigO{\sqrt{\LD{\cH} \cdot n \cdot \log n}}.  
        \end{equation}
        Later, \cite{DBLP:conf/stoc/AlonBDMNY21} showed an improved bound of $\regretStandard{\cH, n} = \ThetaOf{\sqrt{\LD{\cH} \cdot n}}$.
        
        In \cref{section:agnostic} we show a result similar to \cref{eq:standard-agnostic-bound}, for the \emph{transductive} agnostic online setting.
        \vspace*{-1.5em}
        \begin{adjustwidth}{1em}{0em}
            \begin{theorem*}[\textbf{Informl Version of \cref{theorem:agnostic-transductive-bound}}]
                Let $\cH \subseteq \{0,1\}^\cX$, such that $0 < \VC{\cH} < \infty$. Then the agnostic transductive regret for $\cH$ is
                \begin{equation*}
                    \OmegaOf{\sqrt{\VC{\cH} \cdot n}} \leq 
                    \regret{\cH, n} 
                    \leq 
                    \BigO{\sqrt{\VC{\cH} \cdot n \cdot \log n }}.  
                \end{equation*}
            \end{theorem*}
        \end{adjustwidth}
    }
\end{enumerate}




\subsection{Related Works}

The general idea of bounding the number of mistakes by learning algorithms
in sequential prediction problems was introduced in the seminal work of
Littlestone \cite{DBLP:journals/ml/Littlestone87}.
That work introduced the \emph{online} learning model,
where the sequence of examples is revealed to the learner
one example at a time.  After each example $x$ is revealed,
the learner makes a prediction, after which the true target label $y$
is revealed.  The constraint, which makes learning even plausible,
is that this sequence of $(x,y)$ pairs should maintain the property
that there is an (unknown) target concept in a given concept class $\cH$
which is correct on the entire sequence.
Littlestone \cite{DBLP:journals/ml/Littlestone87} also identified the
optimal predictor for this problem (called the \emph{SOA},
for \emph{Standard Optimal Algorithm}, and a general complexity measure
which is precisely equal to the optimal bound on the number of mistakes: a quantity
now referred to as the \emph{Littlestone dimension}.

Later works discussed variations on this framework.
In particular, as mentioned, the transductive model discussed in the present work
was introduced in the work of \cite{DBLP:journals/ml/Ben-DavidKM97}.
The idea (and terminology) of transductive learning was introduced by
\cite{vapnik:74,vapnik:82,DBLP:conf/eurocolt/Kuhlmann99}, to capture scenarios where learning may be easier
due to knowing in advance which examples the learner will be tested on.
\cite{vapnik:74,vapnik:82,DBLP:conf/eurocolt/Kuhlmann99} study transductive learning in a
model closer in spirit to the PAC framework, where some uniform random
subset of examples have their labels revealed to the learner and it is
tasked with predicting the labels of the remaining examples.
In contrast, \cite{DBLP:journals/ml/Ben-DavidKM97} study transductive
learning in a sequential prediction setting, analogous to the online
learning framework of Littlestone.
In this case, the sequence of examples $x$ is revealed to the learner
all at once, and only the target labels (the $y$'s) are revealed
in an online fashion, with the label of each example revealed
just after its prediction for that example 
in the given sequential order.
Since a mistake bound in this setting is still required to hold
for \emph{any} sequence, for the purpose of analysis we may think
of the sequence of $x$'s as being a \emph{worst case} set of examples
and ordering thereof, for a given learning algorithm.
\cite{DBLP:journals/ml/Ben-DavidKM97} compare and contrast the optimal
mistake bound for this setting to that of the original online model
of \cite{DBLP:journals/ml/Littlestone87}.
Denoting by $d$ the Littlestone dimension of the concept class,
it is clear that the optimal mistake bound would be no larger than $d$.
However, they also argue that the optimal mistake bound in the transductive
model is never smaller than $\Omega(\sqrt{\log(d)})$
(as mentioned, we improve this to $\log(d)$ in the present work).
They further exhibit a family of concept classes of variable $d$
for which the transductive mistake bound is strictly smaller
by a factor of $\frac{3}{2}$.
They additionally provide a general equivalent description of the
optimal transductive mistake bound in terms of the
maximum possible rank among a certain family of trees,
each representing the game tree for the sequential
game on a given sequence of examples $x$.

In addition to these two models of sequential prediction,
the online learning framework has also been explored in other variations,
including exploring the optimal mistake bound
under a \emph{best-case} order of the data sequence $x$,
or even a \emph{self-directed} adaptive order in which the learning
algorithm selects the next point for prediction from the remaining $x$'s
from the given sequence on each round.
\cite{DBLP:journals/ml/Ben-DavidKM97,DBLP:conf/colt/Ben-DavidEK95,DBLP:journals/ml/GoldmanS94,DBLP:journals/ml/Ben-DavidE98,DBLP:conf/eurocolt/Kuhlmann99}.

Unlike the online learning model of Littlestone, the transductive model
additionally allows for nontrivial mistake bounds in terms of the sequence
\emph{length} $n$
(the online model generally has $\min\{d,n\}$ as the optimal mistake bound).
In this case, it follows immediately from the Sauer--Shelah--Perles lemma
and a Halving technique that the optimal transductive mistake bound is
no larger than $O(v \log(n))$ \cite{DBLP:conf/nips/KakadeK05},
where $v$ is the VC dimension of the concept class \cite{vapnik:71,vapnik:74}.



\section{Preliminaries}\label{section:preliminaries}

\begin{notation}
    Let $\cX$ be a set and $n,k \in \bbN$. For a sequence $x = (x_1,\dots,x_n) \in \cX^n$, we write $x_{\leq k}$ to denote the subsequence $(x_1,\dots,x_k)$. If $k\leq0$ then $x_{\leq k}$ denotes the empty sequence, $\cX^{0}$.
\end{notation}

\begin{definition}
    Let $k \in \bbN$, let $\cX$ and $\cY$ be sets, and let $\cH \subseteq \cY^\cX$. A sequence $(x_1,y_1),\dots,(x_k,y_k) \in \left(\cX \times \cY\right)^k$ is \ul{realizable by $\cH$}, or \ul{$\cH$-realizable}, if $
        \exists h \in \cH 
        ~
        \forall i \in [k]:
        ~
        h(x_i) = y_i
    $.
\end{definition}

\begin{definition}
    Let $\cX$ be a set, let $\cH \subseteq \{0,1\}^\cX$, let $d \in \bbN$, and let $X = \{x_1,\dots,x_d\} \subseteq \cX$. We say that \ul{$\cH$ shatters $X$} if for every $y \in \{0,1\}^d$ there exists $h \in \cH$ such that for all $i \in [d]$, $h(x_i) = y_i$. The \ul{Vapnik--Chervonenkis (VC) dimension} of $\cH$ is $\VC{\cH} = \sup \: \{|X|: ~ X \subseteq \cX \text{ finite} ~ \land ~ \cH \text{ shatters } X\}$.
\end{definition}

\begin{definition}[\cite{DBLP:journals/ml/Littlestone87}]\label{definition:ld}
    Let $\cX$ be a set and let $d \in \bbN$. A \ul{Littlestone tree of depth $d$} with domain $\cX$ is a set 
    \begin{equation}\label{eq:ld-tree-def}
      T = \left\{
        x_u \in \cX: ~ u \in \bigcup_{s=0}^d \{0,1\}^{s}
      \right\}.
    \end{equation}
    Let $\cH \subseteq \{0,1\}^\cX$. We say that \ul{$\cH$ shatters a tree $T$} as in \cref{eq:ld-tree-def} if for every $u \in \{0,1\}^{d+1}$ there exists $h_u \in \cH$ such that
    \[
        \forall i \in [d+1]: ~ h(x_{u_{\leq i-1}}) = u_{i}.
    \]
    The \ul{Littlestone dimension} of $\cH$, denoted $\LD{\cH}$, is the supremum over all $d \in \bbN$ such that there exists a Littlestone tree of depth $d$ with domain $\cX$ that is shattered by $\cH$.
  \end{definition}

\begin{figure}[ht]
    \centering
    \includegraphics[width=0.7\textwidth]{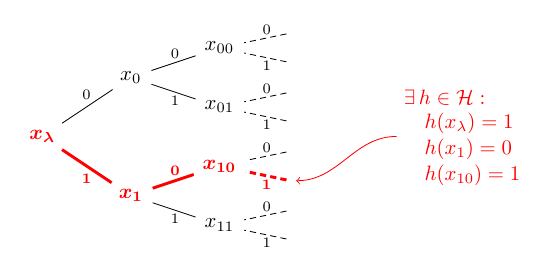}
    \caption{
        A shattered Littlestone tree of depth 2. The empty sequence is denoted by $\lambda$. 
        \\
        \strut\hfill
        {
            \footnotesize
            (Source: \cite{DBLP:conf/stoc/BousquetHMHY21})
        }
    }
\end{figure}

\begin{theorem}[\cite{DBLP:journals/ml/Littlestone87}]\label{theorem:ld-is-upper-bound}
    Let $\cX$ be a set and let $\cH \subseteq \{0,1\}^\cX$ such that $d = \LD{\cH} < \infty$. Then there exists a strategy for the learner that guarantees that the learner will make at most $d$ mistakes in the standard (non-transductive) online learning setting, regardless of the adversary's strategy and of number of instances to be labeled.
\end{theorem}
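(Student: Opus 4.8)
The plan is to implement Littlestone's \emph{Standard Optimal Algorithm} (SOA) and analyze it with a potential argument. As the adversary reveals $x_1,x_2,\dots$ one at a time, the learner maintains the \emph{version space}: $V_0=\cH$, and after the label $y_t$ is revealed, $V_t=\{h\in V_{t-1}:h(x_t)=y_t\}$. Since the adversary must keep the play $\cH$-realizable, every $V_t$ is nonempty. On round $t$, after seeing $x_t$ but before predicting, the learner forms the two candidate restrictions $V_t^{b}=\{h\in V_{t-1}:h(x_t)=b\}$ for $b\in\{0,1\}$ and predicts $\hat y_t\in\arg\max_{b\in\{0,1\}}\LD{V_t^{b}}$, breaking ties arbitrarily; this is well defined since $\LD{V_t^{b}}\le\LD{\cH}=d<\infty$. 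The object to track is the integer $d_t:=\LD{V_t}$, with $d_0=d$; since the Littlestone dimension is monotone under passing to a subclass, $d_t\le d_{t-1}$ on every round.

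The heart of the proof --- and the step I expect to be the main obstacle --- is the claim that \emph{the potential strictly drops on every mistake}: if $\hat y_t\neq y_t$ then $d_t\le d_{t-1}-1$. Suppose not: a mistake occurs on round $t$ and $\LD{V_t^{y_t}}=\LD{V_{t-1}}=:k$. Because $\hat y_t$ was chosen to maximize $\LD{V_t^{b}}$ over $b\in\{0,1\}$, also $\LD{V_t^{\hat y_t}}\ge\LD{V_t^{y_t}}=k$, and $\hat y_t\neq y_t$ forces $\{\hat y_t,y_t\}=\{0,1\}$, so $\LD{V_t^{0}}\ge k$ and $\LD{V_t^{1}}\ge k$. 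Fix a depth-$k$ Littlestone tree shattered by $V_t^{0}$ and one shattered by $V_t^{1}$ (these exist because the family of depths realizing a shattered tree is downward closed), and splice them as the two subtrees beneath a fresh root carrying the instance $x_t$. This yields a Littlestone tree of depth $k+1$ that is shattered by $V_{t-1}$: any branch of it first selects a side $b\in\{0,1\}$ at $x_t$ and then a branch of the depth-$k$ subtree shattered by $V_t^{b}$, which is realized by some $h\in V_t^{b}\subseteq V_{t-1}$, and that $h$ additionally satisfies $h(x_t)=b$ by definition of $V_t^{b}$. Hence $\LD{V_{t-1}}\ge k+1$, contradicting $\LD{V_{t-1}}=k$. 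The only delicate points are lining the splice up with the precise index-string encoding of \cref{definition:ld} (one prepends a single coordinate to every index string) and checking that the two subtree-witnesses, although chosen separately from $V_t^{0}$ and $V_t^{1}$, both belong to the common class $V_{t-1}$ and carry the correct value at the new root.

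Given the claim, the theorem follows at once. If the learner makes $m$ mistakes over the interaction, then applying the claim at the mistake rounds and monotonicity elsewhere gives $d_n\le d_0-m=d-m$; since $V_n\neq\emptyset$ we have $d_n\ge 0$, so $m\le d$ (modulo the base-case/indexing convention of \cref{definition:ld}). The algorithm consults only $\cH$ and the data actually revealed, so it is a legal learner strategy, and because nothing in the bound refers to the number of instances, it holds for every $n$ and every adversary strategy --- which is precisely the statement.
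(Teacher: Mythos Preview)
The paper does not prove this theorem; it is stated in the Preliminaries as a classical result attributed to Littlestone and is invoked only as a black box (e.g., in the proof of \cref{item:trichotomy-constant} of \cref{theorem:trichotomy}). Your argument is correct and is precisely Littlestone's Standard Optimal Algorithm with the usual potential analysis: the splicing step showing that $\LD{V_t^{0}}\ge k$ and $\LD{V_t^{1}}\ge k$ together force $\LD{V_{t-1}}\ge k+1$ is the standard one, and the downward-closure remark handles the existence of depth-$k$ witnesses. Since the paper offers no competing proof, there is nothing further to compare.
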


\begin{theorem}[Sauer--Shelah--Perles; \cite{shelah1972combinatorial,DBLP:journals/jct/Sauer72}]\label{theorem:sauer}
    Let $n,d \in \bbN$, let $\cX$ be a set of cardinality $n$, and let $\cH \subseteq \{0,1\}^\cX$ such that $\VC{\cH}=d$. Then $|\cH| \leq \sum_{i=0}^n\binom{n}{i} \leq \left(\frac{en}{d}\right)^d$.
\end{theorem}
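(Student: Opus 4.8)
The plan is to prove the combinatorial bound $|\cH| \le \sum_{i=0}^{d}\binom{n}{i}$ by the classical double induction on $n=|\cX|$ and $d=\VC{\cH}$, and then to derive the closed form $\sum_{i=0}^{d}\binom{n}{i}\le (en/d)^{d}$ by an elementary estimate. (An alternative route is the down-shifting / compression argument, but the induction is more transparent.)

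For the main bound, fix an element $x\in\cX$ and set $\cX'=\cX\setminus\{x\}$, then split $\cH$ according to its behavior on $x$. Let $\cH'=\{h|_{\cX'}:h\in\cH\}$ be the restriction of $\cH$ to $\cX'$, and let $\cH''$ consist of those $g\in\{0,1\}^{\cX'}$ whose two extensions to $\cX$ (sending $x$ to $0$ and to $1$, respectively) both lie in $\cH$. The bookkeeping observation is that $|\cH|=|\cH'|+|\cH''|$: a function $g\in\cH'$ is the restriction of exactly one member of $\cH$, unless $g\in\cH''$, in which case it is the restriction of exactly two, and $\cH''$ accounts for precisely this surplus. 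Next I would verify the two dimension inequalities: $\VC{\cH'}\le d$, since a subset of $\cX'$ shattered by $\cH'$ is already shattered by $\cH$; and $\VC{\cH''}\le d-1$, since if $S\subseteq\cX'$ is shattered by $\cH''$ then $S\cup\{x\}$ is shattered by $\cH$, forcing $|S|+1\le d$. Applying the induction hypothesis to $\cH'$ and $\cH''$ over the $(n-1)$-element domain $\cX'$ gives $|\cH|\le \sum_{i=0}^{d}\binom{n-1}{i}+\sum_{i=0}^{d-1}\binom{n-1}{i}$, and Pascal's identity $\binom{n-1}{i}+\binom{n-1}{i-1}=\binom{n}{i}$ collapses the right-hand side to $\sum_{i=0}^{d}\binom{n}{i}$. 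The base cases are immediate: if $n=0$ then $|\cH|\le 1=\binom{0}{0}$, and if $d=0$ then no point is shattered, so $|\cH|\le 1$ as well (and the term $\sum_{i=0}^{-1}$ arising from $\cH''$ is vacuous).

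For the closed-form estimate, assume $1\le d\le n$. Multiplying the partial sum by $(d/n)^{d}$ and using $d/n\le 1$ together with $i\le d$,
\[
  (d/n)^{d}\sum_{i=0}^{d}\binom{n}{i} \;\le\; \sum_{i=0}^{d}\binom{n}{i}(d/n)^{i} \;\le\; \sum_{i=0}^{n}\binom{n}{i}(d/n)^{i} \;=\; (1+d/n)^{n} \;\le\; e^{d},
\]
where the last step uses $1+t\le e^{t}$. Rearranging gives $\sum_{i=0}^{d}\binom{n}{i}\le (en/d)^{d}$, as desired.

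There is no real obstacle here — this is a textbook argument. The only points that require care are the counting identity $|\cH|=|\cH'|+|\cH''|$ and keeping the two induction parameters consistent: one must phrase the induction so that the recursive call on $\cH''$ with the reduced parameter $d-1$ is legitimate while the call on $\cH'$ retains $d$ (for instance, induct on $n$ with $d$ universally quantified, or induct on $n+d$), terminating as soon as $n=0$ or $d=0$.
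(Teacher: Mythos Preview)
Your argument is the standard and correct proof of the Sauer--Shelah--Perles lemma. However, the paper does not actually prove this statement: it is stated in the preliminaries as a cited result (\cite{shelah1972combinatorial,DBLP:journals/jct/Sauer72}) and used as a black box later on, so there is no ``paper's own proof'' to compare against. Your double-induction with the restriction/twin decomposition $|\cH|=|\cH'|+|\cH''|$ and the closed-form estimate via $(1+d/n)^n\le e^d$ are exactly the textbook approach and need no modification.

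One incidental remark: the statement as printed in the paper has the upper summation index as $n$ rather than $d$ (i.e., $\sum_{i=0}^{n}\binom{n}{i}$), which would equal $2^n$ and would not in general be bounded by $(en/d)^d$. You have, correctly, proved the intended bound $\sum_{i=0}^{d}\binom{n}{i}$; this is clearly a typo in the paper and not a defect in your proof.
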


\section{Quantitative Bounds}
\label{section:qualitative-bounds}

\subsection{Littlestone Dimension: A Tighter Lower Bound}

The Littlestone dimension is an upper bound on the number of mistakes, namely 
\begin{equation}\label{eq:ld-upper-bound}
    \forall n \in \bbN: ~ \mistakes{\cH, n} \leq \LD{\cH}
\end{equation}
for any class $\cH$. This holds because $\LD{\cH}$ is an upper bound on the number of mistakes for standard (non-transductive) online learning \cite{DBLP:journals/ml/Littlestone87}, and the adversary in the transductive setting is strictly weaker.

The Littlestone dimension also supplies a lower bound. We give a quadratic improvement on the previous lower bound of \cite{DBLP:journals/ml/Ben-DavidKM97}, as follows.

\begin{theorem}\label{theorem:ld-lower-bound}
    Let $\cX$ be a set, let $\cH \subseteq \{0,1\}^\cX$ such that $d = \LD{\cH} < \infty$, and let $n \in \bbN$.
    Then 
    \[
        \mistakes{\cH,n} \geq 
        \min\left\{
            \lfloor \log(d)/2\rfloor, 
            \lfloor \log \log(n)/2 \rfloor
        \right\}.
    \]
\end{theorem}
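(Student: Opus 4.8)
The plan is to exhibit an adversary strategy that forces $m := \min\{\lfloor\log(d)/2\rfloor,\lfloor\log\log(n)/2\rfloor\}$ mistakes; I may assume $m\ge 1$, as otherwise there is nothing to prove. Put $D := 4^{m}$. The definition of $m$ gives $D\le d$ and $2^{D}\le n$: the former, together with $\LD{\cH}=d\ge D$, furnishes a Littlestone tree of depth $D-1$ shattered by $\cH$, with nodes $x_v$ for $v\in\bigcup_{s\le D-1}\{0,1\}^{s}$ and witnesses $h_u\in\cH$ for $u\in\{0,1\}^{D}$; the latter guarantees that its $2^{D}-1\le n$ nodes can all be played as the adversary's instances. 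Note the $2^{D}$ witnesses themselves form a complete binary tree of depth $D$.

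The adversary presents (a suitable ordering of) these nodes and plays as follows. It maintains a \emph{committed path} $p$ (initially the empty string), namely the bits it has so far been forced to reveal along the branch down which it is driving the learner, together with a subfamily $\widetilde L$ of the witnesses that is itself a complete binary tree of some depth $\rho$ hanging below $p$, meaning that for every $w\in\{0,1\}^{\rho}$ there is some $h_u\in\widetilde L$ with $pw\sqsubseteq u$. When the node presented is the path node $x_p$, every $h_u\in\widetilde L$ has $h_u(x_p)=u_{|p|+1}$, and since $\rho\ge 1$ both bit values occur; so the adversary sets $y=1-\hat y$, forcing a mistake, appends this bit to $p$, and passes to the matching half of $\widetilde L$ (so $\rho$ drops by one). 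When the presented node lies off the committed path, the shattering property says nothing about the witnesses' labels there, so the adversary $2$-colours $\widetilde L$ by that node's (as yet unchosen) label, labels the node by the majority colour, and keeps the monochromatic complete subtree guaranteed by the elementary fact that every $2$-colouring of the leaves of a depth-$\rho$ complete binary tree contains a monochromatic complete subtree of depth $\lfloor\rho/2\rfloor$ (so $\rho$ is at worst halved). In both cases $\widetilde L$ stays a nonempty subfamily of $\cH$ consistent with all revealed labels, and exactly the path-node steps force mistakes.

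The heart of the argument, and the step I expect to be the main obstacle, is choosing the presentation order so that the budget $\rho$ does not collapse. The naïve level-by-level (BFS) order is fatally wasteful: before reaching the path node at level $\ell$ one must process about $2^{\ell}$ off-path nodes, each costing a halving, which exhausts $\rho$ after only about $\log\log D$ forced mistakes. One instead wants an order under which the off-path nodes seen before the $i$-th path node number only $O(i)$ — morally, presenting each node immediately together with its sibling subtree and arguing that an off-path node whose divergence from the committed path happened at an earlier step is already forced to be labelled identically by every surviving witness, hence is ``free,'' so that only one genuinely new sibling per step triggers a halving. Making this precise is where the real work lies; granting it, $\rho$ drops by at most a bounded factor per forced mistake, so starting from $\rho=D=4^{m}$ it stays positive through at least $m$ forced mistakes.

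Finally, the two terms of the stated bound are just the two constraints on the usable tree depth: requiring the tree to be shattered forces $D\le d$, giving the $\lfloor\log(d)/2\rfloor$ term, and requiring its $2^{D}-1$ nodes to fit within $n$ instances forces $D\le\log_2(n+1)$, giving the $\lfloor\log\log(n)/2\rfloor$ term.
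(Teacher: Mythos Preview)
Your proposal has a genuine gap at precisely the place you flag, and the fix you sketch does not work. The claim that ``an off-path node whose divergence from the committed path happened at an earlier step is already forced to be labelled identically by every surviving witness'' is false. The shattering condition constrains $h_u$ only at nodes $x_v$ with $v$ a prefix of $u$; once you have committed to a prefix $p$, every surviving witness has $u$ extending $p$, so for any $v$ branching off $p$---at whatever earlier depth---$v$ is not a prefix of $u$ and $h_u(x_v)$ is entirely unconstrained. A Ramsey step performed earlier at the branch point does not propagate to nodes deeper in that off-path subtree. And since the instance sequence must be fixed before the learner moves while the committed path depends on the learner's predictions, you cannot adaptively ``present each node together with its sibling subtree.'' With your bookkeeping (each off-path node costing a halving of $\rho$), any fixed ordering of a depth-$(D{-}1)$ tree appears stuck at roughly $\log\log D$ forced mistakes, exactly as you compute for BFS.

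The paper's proof takes a different route that sidesteps this obstacle entirely. It \emph{does} use BFS order, but it abandons the combinatorial subtree of witnesses in favour of a pure cardinality argument on the version space. At each step the adversary forces a mistake only when the version space is nearly balanced at $x_t$, with ``nearly'' governed by an adaptive threshold $\theta = 2^{-2^{2k}}$ depending on the number $k$ of mistakes already forced; otherwise it follows the overwhelming majority and loses at most a $\theta$ fraction of survivors. The crucial structural fact---unavailable in your framework because you track tree shape rather than size---is that in BFS order, by the time one reaches depth $\ell$ the version space is automatically contained in the $2^{d-\ell+1}$ witnesses of a single depth-$\ell$ subtree, \emph{regardless} of how off-path nodes were labelled. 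If the $(k{+}1)$-st mistake has not occurred by time $2^{2^{2(k+1)}}$, the accumulated gentle majority steps cannot account for this mandatory shrinkage, and one obtains a contradiction. This potential argument, not a clever ordering, is what lifts the bound from $\log\log d$ to $\log d$.
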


\begin{proof}[Proof idea for \cref{theorem:ld-lower-bound}]
    Let $T$ be a Littlestone tree of depth $d$ that is shattered by $\cH$, and let $\cH_1 \subseteq \cH$ be a collection of $2^{d+1}$ functions that witness the shattering. The adversary selects the sequence consisting of the nodes of $T$ in breadth-first order. For each time step $t \in [n]$, let $\cH_t$ denote the version space, i.e., the subset of $\cH_1$ that is consistent with all previously-assigned labels. The adversary's adaptive labeling strategy at time $t$ is as follows. If $\cH_t$ is very unbalanced, meaning that a large majority of functions in $\cH_t$ assign the same value to $x_t$, then the adversary chooses $y_t$ to be that value. Otherwise, if $\cH_t$ is fairly balanced, the adversary forces a mistake (it can do so without violating $\cH$-realizability). The pivotal observation is that: (1) under this strategy, the version space decreases in cardinality significantly more during steps where the adversary forces a mistake compared to steps where it did not force a mistake; 
    (2) let $x_t$ be the $t$-th node in the breadth-first order. It has distance $\ell = \lfloor \log(t)\rfloor$ from the root of $T$. 
    Because $T$ is a binary tree, the subtree $T'$ of $T$ rooted at $x_t$ is a tree of depth $d-\ell$. 
    In particular, seeing as $\cH_t$ contains only functions necessary for shattering $T'$, $|\cH_t| \leq 2^{d-\ell+1}$, so $\cH_t$ must decrease not too slowly with $t$. 
    Combining (1) and (2) yields that the adversary must be able to force a mistake not too rarely. A careful quantitative analysis shows that the number of mistakes the adversary can force is at least logarithmic in $d$.
\end{proof}
\vspace*{-0.5em}

The full proof of \cref{theorem:ld-lower-bound} appears in \cref{section:proof-of-log-ld-lower-bound}.

\subsection{Threshold Dimension}\label{section:td-bounds}

We also show some bounds on the number of mistakes in terms of the threshold dimension.

\begin{definition}\label{definition:threshold-dim}
    Let $\cX$ be a set, let $X = \{x_1,\dots,x_k\} \subseteq \cX$, and let $\cH \subseteq \{0,1\}^\cX$. We say that $X$ is \ul{threshold-shattered} by $\cH$ if there exist $h_1,\dots,h_k \in \cH$ such that $h_i(x_j) = \1(j \leq i)$ for all $i,j\in[k]$. The \ul{threshold dimension} of $\cH$, denoted $\TD{\cH}$, is the supremum of the set of integers $k$ for which there exists a threshold-shattered set of cardinality $k$.
\end{definition}

The following connection between the threshold and Littlestone dimensions is well-known.

\begin{theorem}[\cite{DBLP:books/daglib/0067545,DBLP:books/daglib/0030198}]\label{theorem:td-ld}
    Let $\cX$ be a set, let $\cH \subseteq \{0,1\}^\cX$, and let $d \in \bbN$. Then:
    \begin{enumerate}
        \item{\label{item:big-ld-implies-big-td}
            If $\LD{\cH} \geq d$ then $\TD{\cH} \geq \left\lfloor\log d\right\rfloor$.
        }
        \item{\label{item:big-td-implies-big-ld}
            If $\TD{\cH} \geq d$ then $\LD{\cH} \geq \left\lfloor\log d\right\rfloor$.
        }
    \end{enumerate}
\end{theorem}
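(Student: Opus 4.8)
The two implications are proved independently, and each amounts to translating between the recursive (binary-tree) structure underlying the Littlestone dimension and the linear-order (staircase) structure underlying the threshold dimension.

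For \cref{item:big-td-implies-big-ld}, suppose $x_1,\dots,x_d$ is threshold-shattered by $h_1,\dots,h_d \in \cH$, so $h_i(x_j) = \1(j \leq i)$. I would build a shattered Littlestone tree (in the sense of \cref{definition:ld}) by a balanced binary search on the index set. Place the ``median'' point $x_m$ with $m \approx d/2$ at the root; by the staircase property, every witness with index $< m$ labels $x_m$ by $0$ and every witness with index $\geq m$ labels it by $1$. Restricting to the sub-collection $h_1,\dots,h_{m-1}$ on the points $x_1,\dots,x_{m-1}$ yields again a threshold-shattered system (of size $m-1$), and likewise $h_m,\dots,h_d$ on $x_{m+1},\dots,x_d$ (after reindexing, of size $d-m$); these become the $0$-subtree and the $1$-subtree. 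Each level of the recursion consumes one tree level and roughly halves the index set, so the construction reaches depth $\lfloor \log d\rfloor$, and the hypothesis attached to each leaf is any surviving witness. Verifying that the shattering condition of \cref{definition:ld} holds along every root-to-leaf path is immediate from the staircase property; the only thing requiring care is to choose $m$ at each step so that the halving stays balanced and the depth comes out to exactly $\lfloor\log d\rfloor$.

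For \cref{item:big-ld-implies-big-td}, suppose $\cH$ shatters a Littlestone tree $T$ of depth $d$, with witnesses $\{h_u\}$. I would extract a threshold-shattered set of size $k := \lfloor\log d\rfloor$ by a recursive mining argument: starting from a root-to-leaf path of $T$, descend to a node $v$ at depth about $d/2$; declare $x_v$ the next threshold point, use for its ``low'' witnesses hypotheses whose paths follow the current prefix and branch away (in the ``$0$'' direction) at $v$, and recurse inside the subtree of $T$ hanging below $v$ in the ``$1$'' direction --- which is itself a shattered Littlestone tree of depth at least about $d/2$ --- while maintaining the invariant that all previously selected witnesses remain consistent, i.e.\ pass through this subtree with the already-fixed labels. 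After $k$ such rounds one has points and witnesses realizing the full staircase $\1(j \leq i)$, and hence $\TD{\cH} \geq \lfloor \log d\rfloor$.

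The routine direction is \cref{item:big-td-implies-big-ld}. The main obstacle is \cref{item:big-ld-implies-big-td}: a shattered Littlestone tree pins down a witness hypothesis only along its own root-to-leaf path, not elsewhere, so one cannot simply read a staircase off a single path of $T$ --- a witness of small threshold index must be forced to output $0$ on the \emph{deep} threshold points, which it never encounters along its own path. This is precisely why each newly harvested threshold point costs (roughly) a factor of two in the remaining usable depth, and hence why the guaranteed threshold dimension is only logarithmic in $d$ rather than linear; organizing this recursion and tracking the $\lfloor\log d\rfloor$ bound exactly is the technical heart, and follows the textbook treatments cited above.
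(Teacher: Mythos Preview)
The paper does not itself prove \cref{theorem:td-ld}; it is quoted from the cited references. However, the appendix proves the multiclass analogue of \cref{item:big-ld-implies-big-td} (\cref{theorem:infinite-ld-implie-infinite-td}), and specializing that argument to $|\cY|=2$ is the natural benchmark here. Your treatment of \cref{item:big-td-implies-big-ld} (balanced binary search on the threshold-shattered sequence) is correct and standard.

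Your sketch of \cref{item:big-ld-implies-big-td} has a real gap. You correctly name the obstacle: a witness of small threshold index must output $0$ on every later threshold point, yet those later points lie in a part of the Littlestone tree that this witness's root-to-leaf path never visits, so the shattering condition says nothing about its values there. But your proposed fix --- ``each newly harvested threshold point costs roughly a factor of two in the remaining usable depth'' --- does not resolve this. In your recursion, once a ``low'' witness branches in the $0$-direction at $v$ and you continue into the $1$-subtree below $v$, that witness's values on all subsequently chosen points are simply undetermined; sacrificing depth does not buy you any control over them, so the staircase invariant cannot be maintained.

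The missing ingredient, used in the paper's appendix proof (and in the textbook arguments), is a Ramsey step. One first fixes an arbitrary $h\in\cH$, regards it as a $2$-coloring of the depth-$d$ shattered tree, and applies \cref{lemma:ramsey-two-colors} to extract an $h$-monochromatic complete binary subtree $T'$ of depth at least $(d+1)/2$, say with constant value $y$. The root $x$ of $T'$ becomes the new (largest-index) threshold point, with $h$ itself as its witness; since all future threshold points will be chosen inside $T'$, where $h\equiv y$, the top witness is automatically correct on them. One then recurses on $\cH'=\{g\in\cH:g(x)=1-y\}$, which shatters the $(1-y)$-child subtree of $x$ (depth roughly $d/2-1$). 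The logarithmic loss in depth thus comes from the Ramsey extraction, not from the naive descent you describe; your outline omits precisely this idea.
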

\vsp
\cref{item:big-ld-implies-big-td} in \cref{theorem:td-ld} and \cref{eq:ld-upper-bound} imply that 
\begin{equation*}
    \forall n \in \bbN: ~ \mistakes{\cH, n} \leq 2^{\TD{\cH}}
\end{equation*}
for any class $\cH$. Similarly, \cref{item:big-td-implies-big-ld} in \cref{theorem:td-ld} and \cref{theorem:ld-lower-bound} imply a mistake lower bound of $\OmegaOf{\log\log(\TD{\cH})}$. However, one can do exponentially better than that, as follows.

\begin{claim}\label{claim:lower-bound-td}
    Let $\cX$ be a set, let $\cH \subseteq \{0,1\}^\cX$ such that $d = \TD{\cH} < \infty$, and let $n \in \bbN$.
    Then 
    \[
        \mistakes{\cH,n} \geq 
        \min\left\{
            \left\lfloor\log(d)\right\rfloor,
            \left\lfloor\log(n)\right\rfloor
        \right\}.
    \]
\end{claim}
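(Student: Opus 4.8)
The plan is to have the adversary run a binary search over the thresholds, but --- since in the transductive model the instance sequence must be committed up front --- to pre-load into that sequence \emph{every} point the search could conceivably query, and then argue that the points not actually needed are harmless.

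First I would reduce to a clean structure. Let $X = \{x_1,\dots,x_d\} \subseteq \cX$ be threshold-shattered by $\cH$, witnessed by $h_1,\dots,h_d \in \cH$ with $h_i(x_j) = \1(j \le i)$, and set $k = \min\{\lfloor\log d\rfloor,\lfloor\log n\rfloor\}$. Since $2^k \le d$ and $2^k - 1 \le n$, it suffices to force $k$ mistakes using only $x_1,\dots,x_{2^k}$ and $h_1,\dots,h_{2^k}$, so I will assume $d = 2^k$. Consider the depth-$k$ recursion tree built by repeatedly bisecting: an interval $[a,b]$ of size $s \ge 2$ is split into $[a,j-1]$ and $[j,b]$ at the pivot $j = a + \lceil s/2 \rceil$, with leaves the singleton intervals. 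The $2^k - 1$ internal nodes of this tree carry distinct pivot indices, all lying in $\{2,\dots,2^k\}$.

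Then I would specify the adversary. Its fixed instance sequence lists the pivot indices in breadth-first order, $j_{(1)},\dots,j_{(2^k-1)}$, giving instances $x_{j_{(1)}},\dots,x_{j_{(2^k-1)}}$, padded by $n-(2^k-1)$ copies of $x_1$. Its adaptive labeling maintains the version space $A$, the interval of indices $i$ for which $h_i$ is still consistent with the labels so far; initially $A = [1,2^k]$, and the invariant $A \ne \emptyset$ keeps every prefix $\cH$-realizable (any $h_i$ with $i \in A$ witnesses this). When the current instance is the pivot of the node on the \emph{active path} of the recursion tree (the path selected by the labels forced so far), that pivot is the midpoint of the current $A$, which still has size $\ge 2$, so some $h_i$ with $i \in A$ attains each label; the adversary plays the opposite of the learner's prediction --- forcing a mistake --- and updates $A$ to $[j,\cdot]$ or $[\cdot,j-1]$ accordingly, whose size is $\ge \lfloor |A|/2 \rfloor$. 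When the current instance is an off-path pivot, its index lies entirely to one side of $A$, so every consistent $h_i$ assigns it the same label; the adversary plays that label and no mistake need occur. Padding steps are forced likewise (to $1$).

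Finally I would do the count. Because breadth-first order processes all level-$(\ell-1)$ nodes before any level-$\ell$ node, by the time a level-$\ell$ pivot is presented the adversary has already resolved the active pivots of levels $1,\dots,\ell-1$, hence knows which level-$\ell$ node is active; so exactly one pivot per level $\ell = 1,\dots,k$ is on-path. Since the version space halves at each forced mistake and starts at $2^k$, it has size $2^{k-\ell+1} \ge 2$ when the level-$\ell$ pivot arrives, so a mistake can indeed be forced there and $A$ never empties. The learner therefore makes at least $k$ mistakes against this adversary, giving $\mistakes{\cH,n} \ge k = \min\{\lfloor\log d\rfloor,\lfloor\log n\rfloor\}$. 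The main obstacle is exactly the non-adaptivity of the instance sequence --- a textbook binary search picks pivots on the fly, whereas here the adversary must commit to the whole breadth-first layout of the recursion tree in advance; the point that rescues this is that off-path pivots have labels forced by the current version space, so they neither aid the learner nor endanger $\cH$-realizability. What remains is routine bookkeeping: that the pivot indices are distinct and in range, that the version space halves cleanly (in particular that the pivot index itself falls in the correct half), and that the active level-$\ell$ node is well-defined the moment its pivot is encountered.
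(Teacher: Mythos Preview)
Your proposal is correct and takes essentially the same approach as the paper: both present the threshold-shattered points in a dyadic (breadth-first) order induced by recursive bisection, and force one mistake per level of the resulting depth-$k$ tree. Your write-up is slightly more explicit than the paper's about how off-path pivots are labeled (via the forced value determined by the current interval $A$) and why this preserves $\cH$-realizability, but the strategy and the count are the same.
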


\vspace*{-0.7em}

One of the ideas used in this proof appeared in an example called $\sigma_{\mathrm{worst}}$ in Section 4.1 of \cite{DBLP:journals/ml/Ben-DavidKM97}.

\begin{figure}[ht]
    \centering
    \includegraphics[width=0.5\textwidth]{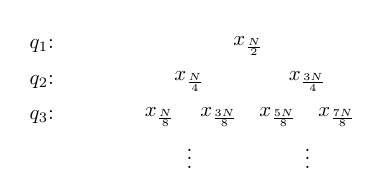}
    \caption{
        Construction of the sequence $q$ in the proof of \cref{claim:lower-bound-td}. $q$ is a breadth-first enumeration of the depicted binary tree.
    }
    \label{figure:td-lower-bound}
\end{figure}

\begin{proof}[Proof of \cref{claim:lower-bound-td}]
    Let $k=\min\left\{
        \left\lfloor\log(d)\right\rfloor,
        \left\lfloor\log(n)\right\rfloor
    \right\}$
    and let $N = 2^k$. Let $X = \{x_1,\dots,x_{N-1}\} \subseteq \cX$ be a set that is threshold-shattered by functions $h_1,\dots,h_{N-1} \in \cH$ and $h_i(x_j) = \1(j \leq i)$ for all $i,j \in [N-1]$. The strategy for the adversary is to present $X$ in dyadic order, namely
    \[
        x_{\frac{N}{2}},x_{\frac{N}{4}},x_{\frac{3N}{4}},x_{\frac{N}{8}},x_{\frac{3N}{8}},x_{\frac{5N}{8}},x_{\frac{7N}{8}},\dots,x_{\frac{(2^k-1)N}{2^k}}.
    \]
    More explicitly, the adversary chooses the sequence $q = q_1\circ q_2 \circ \cdots \circ q_k$, where `$\circ$' denotes sequence concatenation and
    \begin{align*}
        q_i = \left(x_{\frac{1}{2^i}N},x_{\frac{3}{2^i}N},x_{\frac{5}{2^i}N},x_{\frac{7}{2^i}N},\dots,x_{\frac{(2^i-1)}{2^i}N}\right)
    \end{align*} 
    for all $i \in [k]$.
    See \cref{figure:td-lower-bound}.
   
    We prove by induction that for each $i \in [k]$, all labels chosen by the adversary for the subsequences prior to $q_i$ are $\cH$-realizable, and additionally there exists an instance in subsequence $q_i$ on which the adversary can force a mistake regardless of the learners predictions. The base case is that the adversary can always force a mistake on the first instance, $q_1$, by choosing the label opposite to the learner's prediction (both labels $0$ and $1$ are $\cH$-realizable for this instance). Subsequently, for any $i > 1$, note that by the induction hypothesis, the labels chosen by the adversary for all instances in the previous subsequences are $\cH$-realizable. In particular there exists an index $a \in [N]$ such that instance $x_a$ has already been labeled, and all the labels chosen so far are consistent with $h_a$. Let $b$ be the minimal integer such that $b>a$ and $x_b$ has also been labeled. Then $x_a$ and all labeled instances with smaller indices received  label $1$, while $x_b$ and all labeled instances with larger indices received label $0$. Because the sequence is dyadic, subsequence $q_i$ contains an element $x_m$ such that $a < m < b$. The adversary can force a mistake on $x_m$, because
    $h_a$ and $h_m$ agree on all previously labeled instances, but disagree on $x_m$.
\end{proof}
\vspace*{-0.5em}
\cref{claim:lower-bound-td} is used in the proof of the trichotomy (\cref{theorem:trichotomy}, below).

Finally, we note that for every $d \in \bbN$ there exists a hypothesis class $\cH$ such that $d=\TD{\cH}$ and
\[
    \forall n \in \bbN: ~ \mistakes{\cH, n} = \min\{d,n\}.
\]
Indeed, take $\cX = [d]$ and $\cH = \{0,1\}^\cX$. The upper bound holds because $|\cX| = d$, and the lower bound holds by \cref{item:trichotomy-logarithmic} in \cref{theorem:trichotomy}, because $\VC{\cH} = d$.

\section{Trichotomy}

\begin{theorem}\label{theorem:trichotomy}
    Let $\cX$ be a set, let $\cH \subseteq \{0,1\}^\cX$, and let $n \in \bbN$ such that $n \leq |\cX|$. 
    \begin{enumerate}
        \item{\label{item:trichotomy-unbounded}
            If $\VC{\cH} = \infty$ then $\mistakes{\cH,n} = n$.
        }
        \item{\label{item:trichotomy-logarithmic}
            Otherwise, if $\VC{\cH} = d < \infty$ and $\LD{\cH} = \infty$ then 
            \begin{equation}\label{eq:logn-bounds-full}
                \max\{\min\{d,n\},\left\lfloor\log(n)\right\rfloor\} \leq \mistakes{\cH,n} \leq \BigO{d\log(n/d)}.  
            \end{equation}
            Furthermore, each of the bounds in \cref{eq:logn-bounds-full} is tight for some classes.
            The $\OmegaOf{\cdot}$ and $\BigO{\cdot}$ notations hide universal constants that do not depend on $\cX$ or $\cH$.
        }
        \item{\label{item:trichotomy-constant}
            Otherwise, there exists an integer $C(\cH) \leq \LD{\cH}$ (that depends on $\cX$ and $\cH$ but does not depend on $n$) such that $\mistakes{\cH,n} \leq C(\cH)$.
        }
    \end{enumerate}
\end{theorem}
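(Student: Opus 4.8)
The plan is to handle the three cases separately; they are exhaustive and mutually exclusive because $\LD{\cH}<\infty$ implies $\VC{\cH}<\infty$ (indeed $\VC{\cH}\le\LD{\cH}+1$, since a shattered set $\{z_1,\dots,z_k\}$ yields a Littlestone tree of depth $k-1$ shattered by $\cH$ by placing $z_{j+1}$ at every node of level $j$), so exactly one of ``$\VC{\cH}=\infty$'', ``$\VC{\cH}<\infty=\LD{\cH}$'', ``$\LD{\cH}<\infty$'' holds. Case~\ref{item:trichotomy-constant} then needs nothing new: \cref{eq:ld-upper-bound} already gives $\mistakes{\cH,n}\le\LD{\cH}$ for every $n$, so one may simply take $C(\cH)=\LD{\cH}$. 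For Case~\ref{item:trichotomy-unbounded}, $\VC{\cH}=\infty$ together with $n\le|\cX|$ provides a set $\{x_1,\dots,x_n\}\subseteq\cX$ shattered by $\cH$; the adversary presents this sequence and always plays the label opposite to $\hat y_t$, which stays $\cH$-realizable by shattering and forces $n$ mistakes, while $\mistakes{\cH,n}\le n$ is trivial.

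For Case~\ref{item:trichotomy-logarithmic} I would prove the two lower bounds and the upper bound of \cref{eq:logn-bounds-full} independently. The bound $\mistakes{\cH,n}\ge\min\{d,n\}$ follows by taking a shattered set of size $d$, presenting $\min\{d,n\}$ of its points (padding with arbitrary, already-constrained instances of $\cX$ when $n>d$), and forcing a mistake on each shattered point exactly as in Case~\ref{item:trichotomy-unbounded}. The bound $\mistakes{\cH,n}\ge\lfloor\log(n)\rfloor$ follows because $\LD{\cH}=\infty$ forces $\TD{\cH}=\infty$ by \cref{item:big-ld-implies-big-td} of \cref{theorem:td-ld}; applying \cref{claim:lower-bound-td} with a threshold-shattered set of size at least $n$ then yields $\mistakes{\cH,n}\ge\lfloor\log(n)\rfloor$.

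For the upper bound, restrict $\cH$ to $\{x_1,\dots,x_n\}$: by \cref{theorem:sauer} this restriction realizes at most $(en/d)^d$ distinct label patterns, and the Halving algorithm on this finite pattern class---predict the majority label among the patterns consistent with the observed history, which at least halves the surviving set on every mistake while $\cH$-realizability keeps it nonempty---makes at most $d\log_2(en/d)=\BigO{d\log(n/d)}$ mistakes (and at most $n$ mistakes when $n\le d$). Tightness of the three bounds is then witnessed by separate classes: thresholds over a linear order ($d=1$) give $\mistakes{\cH,n}=\ThetaOf{\log n}$, making the $\lfloor\log n\rfloor$ bound and the upper bound tight at $d=1$; a union of $d$ threshold classes over pairwise-disjoint domains has $\VC{\cH}=d$, $\LD{\cH}=\infty$, and $\mistakes{\cH,n}=\ThetaOf{d\log(n/d)}$---split the instance budget evenly among the $d$ independent blocks and run the \cref{claim:lower-bound-td} construction in each---matching the upper bound; and a union of a Boolean cube on $\approx d$ points with a threshold class over a disjoint domain has $\mistakes{\cH,n}=\ThetaOf{\min\{d,n\}+\log n}$, so for small $n$ the $\min\{d,n\}$ term genuinely cannot be dropped.

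Since the two substantive ingredients---the threshold-dimension lower bound \cref{claim:lower-bound-td} and the Sauer--Shelah halving upper bound---are already in hand, the remaining work is largely organizational: verifying exhaustiveness of the trichotomy and checking the tightness examples. I expect the most delicate point to be the $\OmegaOf{d\log(n/d)}$ example, where one must confirm that the transductive adversary, committed to the whole instance sequence in advance, can still realize the bound by playing a dyadic ordering within each of $d$ independent blocks; this works precisely because the per-block realizability constraints do not interact, so the game decomposes into $d$ separate threshold games, none of which is affected by the learner's predictions in the others.
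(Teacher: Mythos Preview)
Your proposal is correct and follows essentially the same route as the paper: the shattered-set adversary for \cref{item:trichotomy-unbounded}, the Sauer--Shelah--Perles plus Halving argument for the upper bound in \cref{item:trichotomy-logarithmic}, the $\TD{\cH}=\infty$ reduction via \cref{theorem:td-ld} and \cref{claim:lower-bound-td} for the $\lfloor\log n\rfloor$ lower bound, and the direct appeal to \cref{eq:ld-upper-bound} for \cref{item:trichotomy-constant}. You go somewhat beyond the paper by sketching explicit tightness witnesses (the paper asserts tightness but does not prove it in the theorem's proof); your product-of-$d$-threshold-blocks example for the $\Theta(d\log(n/d))$ bound is the natural one and works for the reason you give, since the realizability constraints across disjoint domains are independent.
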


\begin{proof}[Proof of \cref{theorem:trichotomy}]
    For \cref{item:trichotomy-unbounded}, assume $\VC{\cH} = \infty$. Then there exists a set $X =\{x_1,\dots,x_n\} \subseteq \cX$ of cardinality $n$ that is shattered by $\cH$. The adversary can force the learner to make $n$ mistakes by selecting the sequence $(x_1,\dots,x_n)$, and then selecting labels $y_t = 1- \hat{y}_t$ for all $t \in [n]$. This choice of labels is $\cH$-realizable because $X$ is a shattered set. 

    To obtain the upper bound in \cref{item:trichotomy-logarithmic} the learner can use the \emph{halving algorithm}, as follows. Let $x = (x_1,\dots,x_n)$ be the sequence chosen by the adversary, and let $\cH|_x$ denote the collection of functions from elements of $x$ to $\{0,1\}$ that are restrictions of functions in $\cH$. For each $t \in \{0,\dots,n\}$, let 
    \[
        \cH_t = \big\{
            f \in \cH|_x: 
            ~
            \left(\forall i \in [t]: ~ f(x_i) = y_i\right) 
        \big\}
    \]
    be a set called the \emph{version space} at time $t$.
    At each step $t \in [n]$, the learner makes prediction 
    \[
        \hat{y}_t = \argmax_{b \in \{0,1\}}\left|\big\{f \in \cH_{t-1}:
        ~ f(x_t) = b\big\}\right|.
    \]
    In words, the learner chooses $\hat{y}_t$ according to a majority vote among the functions in version space $\cH_{t-1}$, and then any function whose vote was incorrect is excluded from the next version space, $\cH_t$. This implies that for any $t \in [n]$, if the learner made a mistake at time $t$ then 
    \begin{equation}\label{eq:halving}
        |\cH_t| \leq \frac{1}{2}\cdot|\cH_{t-1}|.
    \end{equation}
    Let $M = \mistakes{\cH,n}$. The adversary selects $\cH$-realizable labels, so $\cH_n$ cannot be empty. Hence, applying \cref{eq:halving} recursively yields
    \begin{align*}
        1 \leq |\cH_n| \leq 2^{-M}\cdot|\cH_0| \leq 2^{-M} \cdot \BigO{\left(n/d\right)^d},
    \end{align*}
    where the last inequality follows from $\VC{\cH_0} \leq \VC{\cH} = d$ and the Sauer--Shelah--Perles lemma (\cref{theorem:sauer}). Hence $M = \BigO{d\log(n/d)}$, as desired.
    
    For the $\min\{d,n\}$ lower bound in \cref{item:trichotomy-logarithmic}, if $n \leq d$ then the adversary can force $n$ mistakes by the same argument as in \cref{item:trichotomy-unbounded}. For the logarithmic lower bound in \cref{item:trichotomy-logarithmic}, the assumption that $\LD{\cH} = \infty$ and \cref{theorem:td-ld} imply that $\TD{\cH} = \infty$, and in particular $\TD{\cH} \geq n$, and this implies the bound by \cref{claim:lower-bound-td}.

    For \cref{item:trichotomy-constant}, the assumption $\LD{\cH} = k < \infty$ and \cref{theorem:ld-is-upper-bound} imply that for any $n$, the learner will make at most $k$ mistakes. This is because the adversary in the transductive setting is strictly weaker than the adversary in the standard online setting. So there exists some $C(\cH) \in \{0,\dots,k\}$ as desired.
\end{proof}

\begin{remark}
    One can use \cref{theorem:ld-lower-bound} to obtain a lower bounds for the case of \cref{item:trichotomy-logarithmic} in \cref{theorem:trichotomy}. However, that yields a lower bound of $\OmegaOf{\log\log(n)}$, which is exponentially weaker than the bound we show.
\end{remark}

\section{Multiclass Setting}
\label{section:multiclass}

The trichotomy of \cref{theorem:trichotomy} can be generalized to the multiclass setting, in which the label set $\cY$ contains more than two labels. In this setting, the VC dimension is replaced by the Natarajan dimension \cite{DBLP:journals/ml/Natarajan89}, denoted $\mathsf{ND}$, and the Littlestone dimension is generalized in the natural way. The result holds for \emph{finite} sets $\cY$.

\begin{theorem}
    [Informal Version of \cref{theorem:multiclass-trichotomy-formal}]
    \label{theorem:multiclass-trichotomy-informal}
    Let $\cX$ be a set, let $\cY$ be a \emph{finite} set, and let $\cH \subseteq \cY^\cX$. Then $\cH$ satisfies precisely one of the following:
    \begin{enumerate}
        \item{
            $\mistakes{\cH,n} = n$. This happens if $\ND{\cH} = \infty$.
        }
        \item{
            $\mistakes{\cH,n} = \ThetaOf{\log(n)}$. This happens if $\ND{\cH} < \infty$ and $\LD{\cH} = \infty$.
        }
        \item{
            $\mistakes{\cH,n} = \BigO{1}$. This happens if $\LD{\cH} < \infty$.
        }
    \end{enumerate}
\end{theorem}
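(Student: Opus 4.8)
The plan is to run the proof of the binary trichotomy (\cref{theorem:trichotomy}) with the Natarajan dimension $\ND{\cdot}$ in place of $\VC{\cdot}$ and the multiclass Littlestone dimension in place of $\LD{\cdot}$, where the latter is defined via binary trees whose internal nodes each carry an instance together with a pair of distinct labels, shattering meaning that every root-to-leaf path is $\cH$-realizable. The first step is to record the inequality $\ND{\cH} \le \LD{\cH}$: a Natarajan-shattered set of size $d$, with witness maps $f,g$, yields a shattered depth-$d$ binary Littlestone tree by placing the $i$-th point -- with label pair $\bigl(f(x_i),g(x_i)\bigr)$ -- at every node of level $i-1$. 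Consequently the three conditions $\ND{\cH}=\infty$, $\ND{\cH}<\infty \wedge \LD{\cH}=\infty$, and $\LD{\cH}<\infty$ are mutually exclusive and exhaustive, so it remains to pin down the rate in each case.

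For $\ND{\cH}=\infty$ I would argue as in \cref{item:trichotomy-unbounded} of \cref{theorem:trichotomy}: given $n$, the adversary presents a Natarajan-shattered set $\{x_1,\dots,x_n\}$ and at step $t$ plays whichever of $f(x_t),g(x_t)$ differs from $\hat y_t$; every prefix of labels is coordinatewise of the form $y_i\in\{f(x_i),g(x_i)\}$, hence $\cH$-realizable by shattering, so $n$ mistakes are forced (and $n$ is trivially optimal). For $\LD{\cH}<\infty$ I would note that the Standard Optimal Algorithm of \cite{DBLP:journals/ml/Littlestone87} extends verbatim to any finite label set and makes at most $\LD{\cH}$ mistakes in the standard online model, and the transductive adversary is weaker, so $\mistakes{\cH,n}\le \LD{\cH}=O(1)$.

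The substantive case is $\ND{\cH}=d<\infty$, $\LD{\cH}=\infty$. The upper bound $O(\log n)$ comes from the halving algorithm exactly as in \cref{theorem:trichotomy}, with \cref{theorem:sauer} replaced by its Natarajan-dimension analogue, which bounds the number of distinct restrictions of $\cH$ to an $n$-point sequence by $(n|\cY|)^{O(d)}$; a mistake at least halves the nonempty version space, giving $\mistakes{\cH,n}=O\!\bigl(d\log(n|\cY|)\bigr)$. For the lower bound $\Omega(\log n)$, fix $n$. Since $\LD{\cH}=\infty$, for every $D$ there is a shattered depth-$D$ binary Littlestone tree; color each internal node by the unordered pair of labels it carries -- at most $\binom{|\cY|}{2}$ colors -- and apply a simple Ramsey-type fact for trees: any $c$-coloring of the nodes of a complete binary tree of depth $D$ contains a monochromatic embedded complete binary subtree whose depth tends to infinity with $D$ for fixed $c$. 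Choosing $D$ large enough yields such a subtree $\cT'$ of depth $\ge 2^n$ all of whose nodes carry one pair $\{a,b\}$ and all of whose root-to-leaf paths are still $\cH$-realizable. Running the argument behind \cref{item:big-ld-implies-big-td} of \cref{theorem:td-ld} inside this two-symbol tree produces instances $z_1,\dots,z_n$ and functions $g_1,\dots,g_n\in\cH$ with $g_i(z_j)=a$ for $j\le i$ and $g_i(z_j)=b$ for $j>i$. Feeding this two-valued threshold system to the adversary of \cref{claim:lower-bound-td} -- with $a,b$ playing the roles of the labels $1,0$, so that any learner prediction outside $\{a,b\}$ is automatically a mistake -- forces $\lfloor\log n\rfloor$ mistakes using fewer than $n$ instances; padding the remaining rounds with labels consistent with a single surviving $g_i$ keeps the whole length-$n$ sequence $\cH$-realizable. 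Hence $\mistakes{\cH,n}\ge\lfloor\log n\rfloor$, and with the upper bound this gives $\mistakes{\cH,n}=\Theta(\log n)$.

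I expect the main obstacle to be this logarithmic lower bound -- concretely, passing from an arbitrary deep multiclass Littlestone tree to a deep \emph{two-valued} structure on which \cref{claim:lower-bound-td} can be run. Two points need care: the Ramsey step must be arranged so that the monochromatic subtree inherits not only the node instances but the realizability of \emph{all} its root-to-leaf paths, and the two-valued threshold system must be extracted along a single chain of realizing paths so that the witnessing functions really are $\{a,b\}$-valued on the relevant points. Once this is set up, tracking how $|\cY|$ enters the hidden constants of $\Theta(\log n)$ (and making the Ramsey bound quantitative for the formal version) is routine, and the remaining parts of the proof reuse the breadth-first and dyadic-ordering constructions already developed for \cref{theorem:trichotomy,claim:lower-bound-td}.
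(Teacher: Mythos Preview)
Your proposal is correct. The treatment of cases 1 and 3, and the $O(\log n)$ upper bound in case 2, match the paper's proof of \cref{theorem:multiclass-trichotomy-formal} essentially verbatim (halving via Natarajan's analogue of Sauer--Shelah--Perles, and the multiclass SOA for the constant case).

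For the $\Omega(\log n)$ lower bound, however, you take a genuinely different and more direct route than the paper. The paper introduces an intermediate notion, the \emph{multi-class threshold dimension} $\MTD{\cdot}$ (\cref{definition:threshold-dim-multiclass-extended}), and proves $\LD{\cH}=\infty \Rightarrow \MTD{\cH}=\infty$ (\cref{theorem:infinite-ld-implie-infinite-td}) by an induction that at each step applies the tree Ramsey lemma (\cref{ramsey-multi-color}) with the coloring given by the \emph{values of a fixed $h\in\cH$} (so $|\cY|$ colors), not by the node's label pair; it then reduces $\MTD{\cH}=\infty$ to $\TD{\cH}=\infty$ by pigeonhole (\cref{claim:td-vs-mtd}) before invoking the threshold adversary (\cref{claim:multiclass-lower-bound-td}). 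Your approach---coloring once by the unordered label pair, getting a single deep two-valued shattered subtree, and then running the binary $\LD\to\TD$ reduction of \cref{theorem:td-ld} directly---bypasses $\MTD{\cdot}$ and the inductive construction entirely. What the paper's route buys is an explicit recurrence $f_k(d)\ge 1+f_k(\lceil d/2k\rceil-1)$ that makes the dependence on $|\cY|$ transparent; what your route buys is a shorter argument with no new definitions. Both rest on the same tree Ramsey lemma and on the fact that the monochromatic subtree inherits shattering (because the Ramsey construction places the two $T'$-children of each node in distinct immediate subtrees of $T$); you correctly flag this as the point requiring care, and the paper handles it identically. One small wrinkle you do not mention: after Ramsey on unordered pairs, different nodes may carry $(a,b)$ versus $(b,a)$, but swapping the two children at such nodes fixes the orientation while preserving shattering, so this is harmless.
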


The proof of \cref{theorem:multiclass-trichotomy-informal} appears in \cref{section:multiclass-trichotomy}, along with the necessary definitions. The main innovation in the proof involves the use of the multiclass threshold bounds developed in \cref{section:multiclass-threshold-bounds}, which in turn rely on a basic result from Ramsey theory, stated \cref{section:tree-combinatorics}. 


\subsection{The Case of an Infinite Label Set}

It is interesting to observe that the analogy between the binary classification and multiclass classification settings breaks down when the label set $\cY$ is not finite. 

\begin{example}
    There exists a class $\cG \subseteq \cY^\cX$ such that $\cY$ is countable, $\LD{\cG}$ is infinite, but the class is learnable with a mistake bound of $\mistakes{\cG,n} = 1$. 
    To see this, let $\cX$ be countable, and let $\cH \subseteq \{0,1\}^{\cX}$ be a class with $\LD{\cH} = \infty$. 
    For each $i \in \bbN$, let $T_i$ be a Littlestone tree of depth $i$ that is shattered by $\cH$, and let $\{h^i_1,\dots,h^i_{2^{i+1}}\}\subseteq \cH$ be a subset that witnesses the shattering. Let $\cG = \{g^i_j: ~ i \in \bbN ~ \land ~ j \in [2^{i+1}]\}$ be a set of functions such that $g^i_j(x) = (h^i_j(x),i,j)$ for all $i,j$. 
    Let $\cY = \{0,1\} \times \bbN \times \bbN$.
    Observe that $\cG \subseteq \cY^{\cX}$ is a countable set of functions with a countable set of labels. Furthermore, $\LD{\cG} = \infty$ because $\cG$ shatters a sequence of suitable Littlestone trees corresponding to $T_1,T_2,\dots$. However, $\cG$ can be learned with mistake bound $1$, because a single example of the form $\left(x, (h^i_j(x),i,j)\right)$ reveals the correct labeling function $h^i_j$.
\end{example}

Recent work by \cite{DBLP:conf/focs/BrukhimCDMY22} has shown that multiclass PAC learning with infinite $\cY$ is not characterized by the Natarajan dimension, and that instead it is characterized by the DS dimension (introduced by \cite{DBLP:conf/colt/DanielyS14}). It is therefore natural to ask whether the DS dimension might also characterize multiclass transductive online learning with infinite $\cY$. We show that the answer to that question is negative.

Recall the definition of the DS dimension.

\begin{definition}
    Let $d \in \bbN$, let $\cX$ and $\cY$ be sets, and let $\cH \subseteq \cY^{\cX}$. For an index $i \in [d]$ and vectors $y = (y_1,\dots,y_d) \in \cY^d$, $y' = (y_1',\dots,y_d') \in \cY^d$, we say that \ul{$y$ and $y'$ are $i$-neighbors}, denoted $y \sim_{i} y'$, if $\{j \in [d]: ~ y_j \neq y_j'\} = \{i\}$. We say that $\cC \subseteq \cY^d$ is a \ul{$d$-pseudocube} if $\cC$ is non-empty and finite, and 
    \[
        \forall y \in \cC 
        \: 
        \forall i \in [d]
        \:
        \exists y' \in \cC:
        ~ 
        y \sim_{i} y'.
    \]
    For a vector $x = (x_1,\dots,x_d) \in \cX^d$, we say that \ul{$\cH$ DS-shatters $x$} if the set
    \[
        \cH\vert_x := \left\{\big(h(x_1),\dots,h(x_d)\big): ~ h \in \cH\right\} \subseteq \cY^d
    \] 
    contains a $d$-pseudocube.
    
    Finally, the \ul{Daniely--Shalev-Shwartz (DS) dimension of $\cH$} is
    \[
        \DS{\cH} = \sup\left\{d \in \bbN: ~ \left(\exists x \in \cX^d: \: \cH \text{ DS-shatterd } x\right)\right\}.
    \]
\end{definition}

See \cite{DBLP:conf/focs/BrukhimCDMY22} for figures and further discussion of the DS dimension.

The following claim shows that the DS dimension does not characterize transductive online learning, even when $\cY$ is finite.

\begin{claim}
    For every $n \in \bbN$, there exists a hypothesis class $\cH_n$ such that $\DS{\cH_n} = 1$ but the adversary in transductive online learning can force at least $\mistakes{\cH_n,n} = n$ mistakes. 
\end{claim}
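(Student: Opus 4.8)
The plan is to exhibit an explicit class built from prefixes of binary strings. Fix $n \geq 1$ (the case $n=0$ being trivial). Take $\cX = [n]$ and let the label set be the \emph{finite} set $\cY = \bigcup_{s=1}^{n}\{0,1\}^s$ of nonempty binary strings of length at most $n$. For each $z \in \{0,1\}^n$ define $h_z \colon [n] \to \cY$ by $h_z(t) = z_{\leq t}$, the length-$t$ prefix of $z$, and set $\cH_n = \{h_z : z \in \{0,1\}^n\}$. There are two things to check: the mistake lower bound, and $\DS{\cH_n} = 1$.

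For the mistake bound, the adversary presents the sequence $(1,2,\dots,n)$ and plays adaptively. I would argue by induction on $t$ that the transcript $(1,y_1),\dots,(t-1,y_{t-1})$ is $\cH_n$-realizable and that point $t$ admits exactly two realizable labels, namely the two length-$t$ strings extending the string $w := y_{t-1} \in \{0,1\}^{t-1}$ (with $w$ the empty string when $t=1$). Indeed, $y_{t-1} = z_{\leq t-1}$ pins down $z_1,\dots,z_{t-1}$ for any witnessing $h_z$, and a realizable extension of the transcript to point $t$ must label it $w$ followed by a bit $b$; both choices $b\in\{0,1\}$ are realized by extending $w$ and $b$ arbitrarily to a length-$n$ string. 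Since these two candidate labels are distinct, at most one equals the learner's prediction $\hat y_t$, so the adversary can choose $y_t \neq \hat y_t$, forcing a mistake at every step. After $n$ rounds the transcript is precisely the graph of $h_z$ for the resulting $z\in\{0,1\}^n$, hence realizable. As this holds against every learner strategy, $\mistakes{\cH_n,n} \geq n$; the reverse inequality is trivial since there are only $n$ instances, so $\mistakes{\cH_n,n} = n$.

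For the DS dimension: $\DS{\cH_n} \geq 1$ because $\{(0),(1)\} \subseteq \cH_n\vert_{(1)}$ is a $1$-pseudocube. The main point is $\DS{\cH_n} \leq 1$, i.e.\ $\cH_n\vert_{(x_1,x_2)}$ contains no $2$-pseudocube for any $(x_1,x_2)\in\cX^2$. The key observation is that for indices $p \leq q$ in $[n]$ the value $h_z(q)=z_{\leq q}$ determines $h_z(p)=z_{\leq p}$, since the latter is a prefix of the former; hence in $\cH_n\vert_{(x_1,x_2)}$ one coordinate is a function of the other. If $x_1 \leq x_2$, the second coordinate determines the first, so no element $(a,b)$ can have a $1$-neighbor $(a',b)$ with $a'\neq a$; if $x_1 > x_2$, the first coordinate determines the second, so no element can have a $2$-neighbor; and if $x_1 = x_2$ the coordinates are always equal. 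In every case the defining property of a $2$-pseudocube fails for every element, so none exists, and $\DS{\cH_n} = 1$.

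I expect the conceptual heart of the argument — and the only real subtlety — to be the DS-dimension upper bound: the point is that a class can encode a fully shattered binary tree structure (enough to force $n$ mistakes transductively) while, on every pair of coordinates, one label value functionally determines the other, which is exactly what obstructs the single-coordinate flips a pseudocube requires. Everything else is a routine adaptive-adversary induction.
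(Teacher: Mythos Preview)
Your proof is correct and follows essentially the same approach as the paper: both constructions take the hypotheses to be root-to-leaf paths in a depth-$n$ binary tree with distinct edge labels (you make this concrete by labeling with prefixes $z_{\leq t}$, whereas the paper speaks of an abstract tree with distinct edge labels and a minimal shattering class), and both DS-dimension arguments hinge on the same observation that the label at a deeper coordinate determines the label at any shallower coordinate, obstructing the neighbor required by a $2$-pseudocube. Your explicit prefix encoding is arguably cleaner since it removes the need for the paper's minimality hypothesis on $\cH$, but the idea is the same.
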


\begin{proof}
    Fix $n \in \bbN$ and let $\cX = \{0,1,2,\dots,n\}$. Consider a complete binary tree $T$ of depth $n$ such that for each $x \in \cX$, all the nodes at depth $x$ (at distance $x$ from the root) are labeled by $x$, and each edge in $T$ is labeled by a distinct label. Let $\cH$ be a minimal hypothesis class that shatters $T$, namely, $\cH$ shatters $T$ and there does not exist a strict subset of $\cH$ that shatters $T$.

    Observe that $\mistakes{\cH_n,n} = n$, because the adversary can present the sequence $0,1,2,\dots,n-1$ and force a mistake at each time step. 
    To see that $\DS{\cH_n} = 1$, assume for contradiction that there exists a vector $x = (x_1,x_2) \in \cX^2$ that is DS-shattered by $\cH_n$, namely, there exists a $2$-pseudocube $\cC \subseteq \cH\vert_x$.
    Note that $x_1 \neq x_2$, and without loss of generality $x_1 < x_2$ ($\cH$ DS-shatters $(x_1,x_2)$ if and only if it DS-shatters $(x_2,x_1)$).
    
    Fix $y \in \cC$. So $y = \left(h(x_1),h(x_2)\right)$ for some $h \in \cH$.
    Because $\cC$ is a $2$-pseudocube, there exists $y' \in \cC$ that is a $1$-neighbor of $y$. Namely, there exists $g \in \cH$ such that $y' = \left(g(x_1), g(x_2)\right) \in \cC$, $y'_1 \neq y_1$ and $y'_2 = y_2$. However, because each edge in $T$ has a distinct label, and $\cH$ is minimal, it follows that for any $x \in \cX$,
    \[
        g(x) = h(x) 
        ~
        \implies 
        ~ 
        \big(
            \forall x' \in \{0,1,\dots,x\}:
            ~ 
            g(x') = h(x')
        \big).
    \] 
    In particular, $g(x_2) = y'_2 = y_2 = h(x_2)$ implies $y'_1 = g(x_1) = h(x_1) = y_1$ which is a contradiction to the choice of $y'$.
\end{proof}

\section{Agnostic Setting}
\label{section:agnostic}


The \emph{agnostic} transductive  online learning setting is defined analogously to the \emph{realizable} (non-agnostic) transductive online learning setting described in \cref{section:transductive-setting}. An early work by \cite{Cover1965BehaviorOS} observed that it is not possible for a learner to achieve vanishing regret in an agnostic online setting with complete information. Therefore, we consider a game with incomplete information, as follows.

First, the adversary selects an arbitrary sequence of instances, $x_1,\dots,x_n \in \cX$, and reveals the sequence to the learner. Then, for each $t=1,\dots,n$:
\begin{enumerate}
    \item{
        The adversary selects a label $y_t \in \cY$. 
    }
    \item{
        The learner selects a prediction $\hat{y}_t \in \cY$ and reveals it to the adversary.
    }
    \item{
        The adversary reveals $y_t$ to the learner.
    }
\end{enumerate}
At each time step $t \in [n]$, the adversary may select any $y_t \in \cY$, without restrictions.\footnote{Hence the name `agnostic', implying that we make no assumptions concerning the choice of labels.} The learner, which is typically randomized, has the objective of minimizing the \emph{regret}, namely
\[
    \regret{A, \cH, x, y}
    =
    \EE{\left|\left\{t \in [n]: ~ \hat{y}_t \neq y_t\right\}\right|} 
    -
    \min_{h \in \cH}\left|\left\{t \in [n]: ~ h(x_t) \neq y_t\right\}\right|,
\]
where the expectation is over the learner's randomness. In words, the regret is the expected excess number of mistakes the learner makes when it plays strategy $A$ and the adversary chooses the sequence $x\in \cX^n$ and labels $y \in \cY^n$, as compared to the number of mistakes made by the best fixed hypothesis $h \in \cH$. 
We are interested in understanding the value of this game, namely
\[
    \regret{\cH,n} = \inf_{A \in \cA}
    ~
    \sup_{x\in \cX^n}
    ~
    \sup_{y \in \cY^n}
    ~
    \regret{A, \cH, x, y},
\]
where $\cA$ is the set of all learner strategies. We show the following result.

\begin{theorem}
    \label{theorem:agnostic-transductive-bound}
    Let $\cX$ be a set, let $\cH \subseteq \{0,1\}^\cX$, and let $n \in \bbN$ such that $n \leq |\cX|$. Assume $0 < \VC{\cH} < \infty$. Then the agnostic transductive regret for $\cH$ on sequences of length $n$ is 
    \begin{equation*}
        \OmegaOf{\sqrt{\VC{\cH} \cdot n}} \leq 
        \regret{\cH, n} 
        \leq 
        \BigO{\sqrt{\VC{\cH} \cdot n \cdot \log\left(n / \VC{\cH}\right)}}.  
    \end{equation*}
\end{theorem}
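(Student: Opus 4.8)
\textbf{Proof plan for \cref{theorem:agnostic-transductive-bound}.}
The plan is to reduce both directions to standard facts about prediction with expert advice, exploiting the one feature that distinguishes the transductive setting: the instance sequence $x=(x_1,\dots,x_n)$ is known before any labels are chosen. For the upper bound, the first step is to replace the (possibly infinite) class $\cH$ by its finite restriction $\cH\vert_x=\{(h(x_1),\dots,h(x_n)):h\in\cH\}\subseteq\{0,1\}^n$. Since $\VC{\cH\vert_x}\le\VC{\cH}=d$ and $\cH\vert_x$ is a class on at most $n$ distinct points, the Sauer--Shelah--Perles lemma (\cref{theorem:sauer}) gives $|\cH\vert_x|\le(en/d)^d$. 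Next I would run the standard exponential-weights / Hedge algorithm with one expert per element of $\cH\vert_x$, tuning the learning rate using only the known quantities $n$ and $|\cH\vert_x|$. This guarantees expected regret $\BigO{\sqrt{n\log|\cH\vert_x|}}$ against the best expert in the pool; since $\min_{h\in\cH}|\{t:h(x_t)\ne y_t\}|=\min_{f\in\cH\vert_x}|\{t:f_t\ne y_t\}|$, the best expert is exactly the benchmark in the definition of $\regret{A,\cH,x,y}$, and $\log|\cH\vert_x|\le d\log(en/d)$, which yields $\regret{\cH,n}\le\BigO{\sqrt{\VC{\cH}\cdot n\cdot\log(n/\VC{\cH})}}$. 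The only thing to check here is that revealing $y_t$ to the learner after each round is precisely the full-information feedback of the experts model, so the quoted Hedge bound applies verbatim, and that the adversary committing to $y_t$ before seeing $\hat y_t$ can only help the learner.

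For the lower bound, assume $d\le n$ (this is the regime of interest; when $d>n$ the bound $\OmegaOf{\sqrt{dn}}$ exceeds the trivial ceiling $n$ on the regret and should be read as $\OmegaOf{n}$, witnessed by the realizable construction used for \cref{item:trichotomy-unbounded} of \cref{theorem:trichotomy}: present $n$ distinct points of a shattered set, each once, with labels flipped against the learner, giving regret $n$ and benchmark $0$). Fix a shattered set $\{z_1,\dots,z_d\}$ and partition the $n$ rounds into $d$ consecutive blocks of nearly-equal lengths $m_1,\dots,m_d\in\{\lfloor n/d\rfloor,\lceil n/d\rceil\}$; in block $i$ the adversary presents the single instance $z_i$, repeated $m_i$ times, and labels each of those rounds by an independent uniform bit. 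This defines a randomized adversary, so $\regret{\cH,n}\ge\inf_A\EE{\regret{A,\cH,x,y}}$ with the expectation over the random labels. Two short computations then finish it. First, at each round $t$ the label $y_t$ is an independent fair coin revealed only after the learner commits to $\hat y_t$, so $\Pr[\hat y_t\ne y_t]=\tfrac12$ for every $A$, and the learner's expected total loss is exactly $n/2$. Second, because $\{z_1,\dots,z_d\}$ is shattered, the best hypothesis can independently take the majority label on each $z_i$, so the benchmark equals $\sum_{i=1}^d\min\{S_i,m_i-S_i\}$ where $S_i\sim\mathrm{Bin}(m_i,\tfrac12)$; using $\EE{|S_i-m_i/2|}=\Theta(\sqrt{m_i})$ (mean absolute deviation of a binomial, valid for all $m_i\ge1$) gives expected benchmark $n/2-\Theta\!\big(\sum_i\sqrt{m_i}\big)=n/2-\Theta(\sqrt{dn})$. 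Subtracting, $\EE{\regret{A,\cH,x,y}}\ge\OmegaOf{\sqrt{\VC{\cH}\cdot n}}$ for every $A$.

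I do not expect a deep obstacle here — after the transductive reduction this is essentially $N$-expert online learning with $N$ controlled by Sauer--Shelah, i.e.\ known technology. The main care is bookkeeping: on the lower-bound side, justifying the independence of $y_t$ from $\hat y_t$ under the stated move order, the passage from the randomized-adversary expected regret to the worst-case $\sup_y$, and the divisibility and $d>n$ edge cases; on the upper-bound side, confirming that the learning-rate tuning and the experts-model regret guarantee transfer cleanly to this game. The one estimate worth stating carefully is the binomial anti-concentration bound $\EE{\min\{S_i,m_i-S_i\}}=m_i/2-\Theta(\sqrt{m_i})$, which drives the $\sqrt{\VC{\cH}\cdot n}$ rate.
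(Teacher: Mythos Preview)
Your proposal is correct and follows essentially the same approach as the paper. The upper bound is identical (restrict to $\cH\vert_x$, bound its size by Sauer--Shelah--Perles, then run multiplicative weights/Hedge); the lower bound is also the same construction (a shattered $d$-set with each point repeated $\approx n/d$ times, i.i.d.\ uniform labels, learner's expected loss $n/2$, benchmark analyzed per coordinate via the majority), with the paper invoking Khinchine's inequality where you invoke the equivalent binomial mean-absolute-deviation estimate, and with the paper padding the tail with $n-kd$ arbitrary points where you use mixed block sizes $\lfloor n/d\rfloor,\lceil n/d\rceil$.
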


The upper bound in \cref{theorem:agnostic-transductive-bound} follows directly from the the Sauer--Shelah--Perles lemma (\cref{theorem:sauer}), together with the following well-known bound on the regret of the \emph{Multiplicative
Weights} algorithm (see, e.g., Theorem 21.10 in \cite{DBLP:books/daglib/0033642}).

\begin{theorem}
    \label{theorem:multiplicative-weights}
    Let $\cX$ be a set and let $\cH \subseteq \{0,1\}^\cX$ be finite. There exists an algorithm for the standard (non-transductive) agnostic online learning setting that satisfies
    \[
        \regretStandard{\cH, n} \leq \sqrt{2\log\left(|\cH|\right)}. 
    \]
\end{theorem}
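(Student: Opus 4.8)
The plan is to prove the bound by analysing the exponential weights algorithm (the \emph{Multiplicative Weights} / Hedge algorithm) via the standard entropy--potential argument; this is a textbook result, so the write-up is mostly a matter of assembling the pieces carefully. Write $N = |\cH|$ and enumerate $\cH = \{h_1,\dots,h_N\}$. On each round $t$ the learner sees $x_t$, predicts, and then $y_t$ is revealed; set $\ell_t(i) = \1(h_i(x_t)\neq y_t)\in\{0,1\}$, the round-$t$ ``loss'' of expert $i$. First I would reduce the binary prediction task to the experts problem: the learner maintains a distribution $p_t$ on $[N]$, samples $i\sim p_t$, and predicts $\hat y_t = h_i(x_t)$, so that (in expectation over the sampling) its round-$t$ indicator loss equals $\langle p_t,\ell_t\rangle$; hence the learner's regret equals $\sum_{t=1}^n\langle p_t,\ell_t\rangle - \min_{i\in[N]}\sum_{t=1}^n \ell_t(i)$, and it suffices to bound this quantity for a good choice of $p_1,\dots,p_n$.

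Next I would run exponential weights: fix a learning rate $\eta>0$, let $L_{t-1}(i)=\sum_{s<t}\ell_s(i)$, define unnormalised weights $w_t(i)=e^{-\eta L_{t-1}(i)}$ and $W_t=\sum_i w_t(i)$ (so $W_1=N$), and predict with $p_t(i)=w_t(i)/W_t$. The core estimate is the one-round potential inequality
\[
    \frac{W_{t+1}}{W_t} \;=\; \sum_{i\in[N]} p_t(i)\,e^{-\eta\ell_t(i)} \;\le\; \exp\!\left(-\eta\,\langle p_t,\ell_t\rangle + \frac{\eta^2}{8}\right),
\]
which is Hoeffding's lemma applied to the $[0,1]$-valued variable $\ell_t(i)$ with $i\sim p_t$ (alternatively a direct calculation from $e^{-\eta u}\le 1-\eta u + \eta^2 u^2/2$ for $u\in[0,1]$ together with $\ln(1+z)\le z$). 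Multiplying over $t=1,\dots,n$ gives $W_{n+1}\le N\exp\!\big(-\eta\sum_{t}\langle p_t,\ell_t\rangle + n\eta^2/8\big)$, while trivially $W_{n+1}=\sum_i e^{-\eta L_n(i)} \ge e^{-\eta\min_i L_n(i)}$. Taking logarithms and rearranging yields
\[
    \sum_{t=1}^n\langle p_t,\ell_t\rangle - \min_{i\in[N]}\sum_{t=1}^n\ell_t(i) \;\le\; \frac{\ln N}{\eta} + \frac{n\eta}{8}.
\]

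Finally I would optimise the learning rate: taking $\eta=\sqrt{8\ln N/n}$ makes the right-hand side $\sqrt{n\ln(N)/2} = \BigO{\sqrt{n\log|\cH|}}$, which is the claimed bound. I do not expect a genuine obstacle here; the two points that need care are (i) the reduction in the first step --- verifying that sampling a hypothesis from $p_t$ produces per-round expected $0/1$ loss exactly $\langle p_t,\ell_t\rangle$, so that the prediction regret coincides with the experts regret --- and (ii) pinning down the constant $\tfrac{\eta^2}{8}$ in the one-round bound via Hoeffding's lemma. One caveat is that the optimal $\eta$ depends on the horizon $n$, which is available in this setting; if $n$ were unknown one would instead employ the usual doubling trick over the horizon, at the cost of only a constant factor.
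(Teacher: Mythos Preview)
Your argument is correct and is the standard exponential-weights / Hoeffding-potential proof of the Hedge regret bound. The paper, however, does not give its own proof of this statement: it simply quotes the result as Theorem~21.10 of \cite{DBLP:books/daglib/0033642} and uses it as a black box to derive the upper bound in \cref{theorem:agnostic-transductive-bound}. So there is nothing to compare against; you have supplied exactly the textbook derivation that the paper cites.

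Two minor remarks. First, the displayed bound in the paper's statement is evidently missing a factor of $n$: your computation correctly yields $\sqrt{(n/2)\ln|\cH|}$, and the cited Theorem~21.10 states $\sqrt{2n\ln|\cH|}$, so ``$\sqrt{2\log|\cH|}$'' is a typo rather than something you failed to match. Second, your caveat about the adversary is handled by the paper's protocol in \cref{section:agnostic}: the adversary commits to $y_t$ before seeing $\hat y_t$, so the per-round expected loss of sampling $i\sim p_t$ and predicting $h_i(x_t)$ is indeed $\langle p_t,\ell_t\rangle$, and the reduction to the experts problem goes through without further argument.
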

\vspace*{-0.6em}
\cref{theorem:multiplicative-weights} implies the upper bound of \cref{theorem:agnostic-transductive-bound}, because the adversary in the transductive agnostic setting is weaker than the adversary in the standard agnostic setting.

We prove the lower bound of \cref{theorem:agnostic-transductive-bound} using an anti-concentration technique from Lemma 14 of \cite{DBLP:conf/colt/Ben-DavidPS09}. The proof appears in \cref{section:proof-agnostic-lower-bound}.

\begin{remark} 
    Additionally:
    \begin{enumerate}
        \item{
            If $\VC{\cH} = 0$ (i.e., classes with a single function) then the regret is $0$.
        }
        \item{
            If $\VC{\cH} < \infty$ and $\LD{\cH} < \infty$ then the regret is $\regret{\cH, n} = \BigO{\sqrt{\LD{\cH} \cdot n}}$, by \cite{DBLP:conf/stoc/AlonBDMNY21} (as mentioned above). Namely, in some cases the $\log(n)$ factor in \cref{theorem:agnostic-transductive-bound} can be removed.
        }
        \item{
            If $\VC{\cH} = \infty$ then the regret is $\OmegaOf{n}$.
        }
    \end{enumerate}
\end{remark}

\section{Future Work}

Some remaining open problems include:
\begin{enumerate}
    \item{
        Showing a sharper bound for the $\ThetaOf{1}$ case in the trichotomy (\cref{theorem:trichotomy}). Currently, there is an exponential gap between the best known upper and lower bounds for Littlestone classes.
    }
    \item{
        Showing sharper bounds for the $\ThetaOf{\log n}$ cases in the trichotomy (\cref{theorem:trichotomy}) and multiclass trichotomy
        (\cref{theorem:multiclass-trichotomy-formal}).
    }
    \item{
        Showing a sharper bound for the agnostic case (\cref{theorem:agnostic-transductive-bound}).
    }
    \item{
        Characterizing the numeber of mistakes in the multiclass setting with an infinite label set $\cY$ (complementing \cref{theorem:multiclass-trichotomy-formal}).
    }
\end{enumerate}

\phantomsection
\addcontentsline{toc}{section}{Acknowledgments}

\section*{Acknowledgments}

Zachary Chase contributed significantly to this paper. All errors are our own.

SM is a Robert J.\ Shillman Fellow; he acknowledges support by ISF grant 1225/20, by BSF grant
2018385, by an Azrieli Faculty Fellowship, by Israel PBC-VATAT, by the Technion Center for
Machine Learning and Intelligent Systems (MLIS), and by the European Union (ERC, GENERALIZATION, 101039692). JS was supported by DARPA (Defense Advanced Research Projects
Agency) contract \# HR001120C0015 and the Simons Collaboration on The Theory of Algorithmic
Fairness. Views and opinions expressed are those
of the author(s) only and do not necessarily reflect those of the European Union, the European
Research Council Executive Agency, DARPA, or the Simons Foundation. The European Union, the granting authority, DARPA, and the Simons Foundation cannot
be held responsible for them.

\phantomsection
\addcontentsline{toc}{section}{References}

\bibliographystyle{halpha}
\bibliography{paper}

\vfill

\newpage

\phantomsection
\addcontentsline{toc}{section}{Appendices}

\begin{center}
    \Large\bfseries
    Appendices
    \\[1em]
\end{center}

\appendix

\section{Proof of Lower Bound for Littlestone Classes}\label{section:proof-of-log-ld-lower-bound}

\begin{proof}[Proof of \cref{theorem:ld-lower-bound}]
    Let $T$ be a Littlestone tree of depth $d$ that is shattered by $\cH$, and let $\cH_1 \subseteq \cH$ be a collection of $2^{d+1}$ functions that witness the shattering. $T$ contains $n_T = 2^{d+1}-1$ nodes. The adversary selects the sequence 
    \[
        x_1,x_2,\dots,x_{n}
    \]
    consisting of the first $n$ nodes of $T$ in breadth-first order (if $n > n_T$, then the adversary chooses the suffix $x_{n_T+1},\dots,x_n$ arbitrarily). For each time step $t \in [n]$, let $\cH_t$ denote the version space, i.e., the subset of $\cH_1$ that is consistent with all previously-assigned labels. Namely, for any $t > 1$,
    \[
        \cH_t =  \left\{h \in \cH_1: \: \left(\forall s \in [t-1]: \: h(x_s) = y_s\right) \right\}.
    \]
    Similarly, for each $b \in \{0,1\}$, let $\cH_{t,b} = \left\{h \in \cH_t: \: h(x_t) = b\right\}$.

    \begin{algorithmFloat}[ht]
        \begin{ShadedBox}


            \begin{algorithmic}

                \State {\bfseries send} $x_1,\dots,x_n$ to learner
                    
                \vsp
                
                \State $k \gets 1$

                \vsp
                
                \For $t \gets 1,2,\dots,n$:
                    \vsp

                    \State $\tmax{k} \gets 2^{2^{2k}}$

                    \vsp

                    \State $\threshold{t} \gets 1/\tmax{k}$

                    \vsp

                    \State $r_t \gets |\cH_{t,1}|/|\cH_{t}|$
                    
                    \vsp
                    
                    \State {\bfseries receive} $\hat{y}_t$ from learner

                    \vsp

                    \State $y_t \gets
                        \left\{
                        \begin{array}{ll}
                            1-\hat{y}_t & r_t \in [\threshold{t}, 1-\threshold{t}] \\
                            0 & r_t \in [0, \threshold{t}) \\
                            1 & r_t \in (1-\threshold{t},1] \\
                        \end{array}
                        \right.$
                    
                    \vsp
                    
                    \State {\bfseries send} $y_t$ to learner

                    \vsp

                    \If $r_t \in [\threshold{t}, 1-\threshold{t}]$:
                        \vsp
                        \State $k \gets k+1$
                    \EndIf

                    
                \EndFor
                
            \end{algorithmic}	
        \end{ShadedBox}
        \caption{An adversary that forces $\OmegaOf{\log(\LD{\cH})}$ mistakes.}
        \label{algorithm:lower-bound-adversary}
    \end{algorithmFloat}
    The adversary operates according to \cref{algorithm:lower-bound-adversary}.
    Conceptually, at each time step $t \in[n]$, if $\cH_t$ is very unbalanced, meaning that a large majority of the functions in $\cH_t$ assign the same value to $x_t$, then the adversary chooses $y_t$ to be that value. Otherwise, if $\cH_t$ is fairly balanced, the adversary forces a mistake. Note that if $\cH_t$ is fairly balanced then the adversary can force a mistake without violating $\cH$-realizability.

    We now argue that using this strategy, the adversary forces $\Omega(\log(d))$ mistakes. Let $F = \left\{t_1,t_2,\dots\right\} = \left\{t \in [n]: ~ r_t \in [\threshold{t}, 1-\threshold{t}]\right\}$ be the set of time steps where the adversary forces a mistake. 
    Note that in the for-loop in \cref{algorithm:lower-bound-adversary}, the value of $k$ at the beginning of iteration $t_k$ is $k$ (e.g., at the beginning of iteration $t_3$, $k=3$).
    
    We argue by induction that for any $k \in \bbN$, if $m_k := 2^{2^{2k}} \leq n$ then:
    \begin{enumerate}
        \item{
            \label{item:t-k-bound}
            $|F| \geq k$ and $t_k \leq \tmax{k}$; and
        }
        \item{
            \label{item:t-subtree-size}
            $\left|\cH_{t_k}\right| \geq (1/\tmax{k})^2\cdot\left|\cH_1\right|$.
        }
    \end{enumerate}

    The base case is immediate for $t_1 = 1 \in F$. For the induction step, assuming that \cref{item:t-k-bound,item:t-subtree-size} hold for some $k \in \bbN$ such that $m_{k+1} \leq n$, we show that they also hold for $k+1$. For \cref{item:t-k-bound}, assume for contradiction that $t \notin F$ for all $t$ such that $t_k < t \leq \tmax{k+1}$.

    For each $t$, $t_k < t \leq \tmax{k+1}$, the definition of $r_t$ and the adversary's labeling strategy imply that the label $y_t$ agrees with at least a $\left(1-\threshold{t}\right)$-majority of the functions in the version space $\cH_t$. Hence, 
    \begin{align}
        \label{eq:lower-bound-on-H-t-star-k}
        \left|\cH_{\tmax{k+1}}\right| 
        &\geq
        \left|\cH_{t_k}\right|\cdot \prod_{t = t_k+1}^{\tmax{k+1}}(1-\threshold{t}) 
        \nonumber
        \\
        &=
        \left|\cH_{t_k}\right| \cdot (1-1/\tmax{k+1})^{\tmax{k+1}-t_k}
        \nonumber
        \\
        &\geq 
        \left|\cH_{t_k}\right| \cdot (1-1/\tmax{k+1})^{\tmax{k+1}}
        \nonumber
        \\
        &\geq
        \left|\cH_1\right| \cdot (1/\tmax{k})^2 \cdot  (1-1/\tmax{k+1})^{\tmax{k+1}}
        \tagexplainhspace{Induction hypothesis for \cref{item:t-subtree-size}}{5.8em}
        \nonumber
        \\
        &\geq
        \left|\cH_1\right| \cdot (1/\tmax{k})^2 \cdot (1/4)
        \nonumber
        \\
        &=
        \left|\cH_1\right| \cdot 2^{-2^{2k+1}-2}.
    \end{align}
    Observe that for every $t \in [n]$, if $x_t$ is a node with depth $\ell$ in $T$ (i.e., the shortest path from the root to $x_t$ contains $\ell$ edges), then 
    there exists an `active' node $x^*_\ell$ with the same depth $\ell$ in $T$ such that the version space $\cH_t$ contains only functions from $\cH_1$ that are consistent with the labels along the path from the root of $T$ to $x^*_\ell$.
    Namely, $\cH_t$ is a subset of the $2^{d-\ell+1}$ functions in $\cH_1$ that witness the shattering of the subtree $T_\ell$ of $T$ that is rooted at $x^*_\ell$. 
    In particular, the depth (distance from the root) of node $x_{\tmax{k+1}}$ is $\log\left(2^{2^{2(k+1)}}\right) = 2^{2k+2}$, so 
    \begin{align}
        \label{eq:upper-bound-on-H-t-star-k}
        \left|\cH_{\tmax{k+1}}\right| 
        &\leq
        2^{d -2^{2k+2} + 1} = 2^{d+1}\cdot 2^{-2^{2k+2}}
        = 
        \left|\cH_1\right| \cdot 2^{-2^{2k+2}}.
    \end{align}
    Combining \cref{eq:lower-bound-on-H-t-star-k,eq:upper-bound-on-H-t-star-k} yields $2^{-2^{2k+1}-2} \leq 2^{-2^{2k+2}}$, which is a contradiction. This establishes \cref{item:t-k-bound}. \cref{item:t-subtree-size} follows by a similar calculation, which accounts for the fact that at time $t_{k+1}$  the adversary forces a mistake, and this reduces the version space by a factor of at most $\threshold{t_{k+1}}$:
    \begin{align*}
        \left|\cH_{t_{k+1}}\right|
        &\geq
        \left|\cH_{t_k}\right|\cdot \left(\prod_{t = t_k+1}^{t_{k+1}-1}(1-\threshold{t})\right) \cdot \threshold{t_{k+1}}
        \nonumber
        \\
        &\geq 
        \left|\cH_{t_k}\right| \cdot (1-1/\tmax{k+1})^{\tmax{k+1}} \cdot (1/\tmax{k+1})
        \\
        &\geq 
        \left|\cH_{t_k}\right| \cdot (1/4) \cdot (1/\tmax{k+1})
        \\
        &\geq 
        \left|\cH_1\right| \cdot (1/\tmax{k})^2 \cdot (1/4) \cdot (1/\tmax{k+1})
        \tagexplainhspace{Induction hypothesis for \cref{item:t-subtree-size}}{-1.6em}
        \nonumber
        \\
        &=
        \left|\cH_1\right| \cdot 2^{-2\cdot2^{2k}} \cdot (1/4) \cdot 2^{-2^{2k+2}}
        =
        \left|\cH_1\right| \cdot 2^{-6\cdot2^{2k} -2}
        \nonumber
        \\
        &\geq 
        \left|\cH_1\right| \cdot 2^{-8\cdot2^{2k}}
        =
        \left|\cH_1\right| \cdot 2^{-2\cdot2^{2(k+1)}} 
        =
        \left|\cH_1\right| \cdot (1/\tmax{k+1})^2.
        \nonumber
    \end{align*}
    This completes the induction. 

    To complete the proof, let $k^* = \min \left\{\lfloor \log(d)/2\rfloor, \lfloor \log \log(n)/2 \rfloor\right\}$. Then $\tmax{k^*} \leq 2^d < 2^{d+1}-1 = n_T$, so $T$ contains at least $\tmax{k^*}$ nodes. Additionally, $\tmax{k^*} \leq n$, so \cref{item:t-k-bound} implies that $|F| \geq k^*$, namely, the adversary can force at least $k^*$ mistakes, as desired.
\end{proof}

\section{Multiclass Trichotomy}
\label{section:multiclass-trichotomy}

The following generalization of the Littlestone dimesion to the multiclass setting is due to \cite{DBLP:journals/jmlr/DanielySBS15}.

\begin{definition}[Multiclass Littlestone Dimension]
    Let $\cX$ and $\cY$ be sets and let $d \in \bbN$. 
    A \ul{Littlestone tree of depth $d$} with domain $\cX$ and label set $\cY$ is a set 
    \begin{equation}\label{eq:ld-tree-def-multiclass}
      T = \left\{
        (x_u, y_{u\circ0}, y_{u\circ1}) \in \cX \times \cY \times \cY: ~ u \in \bigcup_{s=0}^d \{0,1\}^{s} ~ \land ~ y_{u\circ0} \neq  y_{u\circ1}
      \right\},
    \end{equation}
    where `$\circ$' denotes string concatenation.
    Let $\cH \subseteq \cY^\cX$. We say that \ul{$\cH$ shatters a tree $T$} as in \cref{eq:ld-tree-def-multiclass} if for every $u \in \{0,1\}^{d+1}$ there exists $h_u \in \cH$ such that
    \[
        \forall i \in [d+1]: ~ h(x_{u_{\leq i-1}}) = y_{u_{\leq i}}.
    \]
    The \ul{Littlestone dimension} of $\cH$, denoted $\LD{\cH}$, is the supremum over all $d \in \bbN$ such that there exists a Littlestone tree of depth $d$ with domain $\cX$ and label set $\cY$ that is shattered by $\cH$.
\end{definition}

The Natarajan dimension is a popular generalization of the VC dimension to the multiclass setting.

\begin{definition}[\cite{DBLP:journals/ml/Natarajan89}]
    Let $\cX$ and $\cY$ be sets, let $\cH \subseteq \cY^\cX$, let $d \in \bbN$, and let $X = \{x_1,\dots,x_d\} \subseteq \cX$. We say that \ul{$\cH$ Natarajan-shatters $X$} if there exist $f_0,f_1: \: X \to \cY$ such that:
    \begin{enumerate}
        \item{
            $\forall x \in X: ~ f_0(x) \neq f_1(x)$; and
        }
        \item{
            $\forall A \subseteq X 
            ~ 
            \exists h \in \cH
            ~
            \forall x \in X:
            ~
            h(x)
            =
            f_{\1(x \in A)}(x)$.
        }
    \end{enumerate}
    The \ul{Natarajan dimension} of $\cH$ is $\ND{\cH} = \sup \: \{|X|: ~ X \subseteq \cX \text{ finite} ~ \land ~ \cH \text{ Natarajan-shatters } X\}$.
\end{definition}

We show the following generalization of \cref{theorem:trichotomy} for the multiclass setting.

\begin{theorem}[\textbf{Formal Version of \cref{theorem:multiclass-trichotomy-informal}}]
    \label{theorem:multiclass-trichotomy-formal}
    Let $\cX$ and $\cY$ be sets with $k = |\cY| < \infty$, let $\cH \subseteq \cY^\cX$, and let $n \in \bbN$ such that $n \leq |\cX|$. 
    \begin{enumerate}
        \item{\label{item:multiclass-trichotomy-unbounded}
            If $\ND{\cH} = \infty$ then $\mistakes{\cH,n} = n$.
        }
        \item{\label{item:multiclass-trichotomy-logarithmic}
            Otherwise, if $\ND{\cH} = d < \infty$ and $\LD{\cH} = \infty$ then 
            \begin{equation}\label{eq:multiclass-logn-bounds-full}
                \max\{\min\{d,n\},\left\lfloor\log(n)\right\rfloor\} \leq \mistakes{\cH,n} \leq \BigO{d\log(nk/d)}.  
            \end{equation}
            The $\OmegaOf{\cdot}$ and $\BigO{\cdot}$ notations hide universal constants that do not depend on $\cX$, $\cY$ or $\cH$.
        }
        \item{\label{item:multiclass-trichotomy-constant}
            Otherwise, there exists a number $C(\cH) \in \bbN$ (that depends on $\cX$, $\cY$ and $\cH$ but does not depend on $n$) such that $\mistakes{\cH,n} \leq C(\cH)$.
        }
    \end{enumerate}
\end{theorem}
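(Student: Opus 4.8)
\textbf{Proof proposal for \cref{theorem:multiclass-trichotomy-formal}.}
The plan is to transport the proof of \cref{theorem:trichotomy} to the multiclass setting, replacing $\mathsf{VC}$ by $\mathsf{ND}$ and the threshold-dimension machinery by a multiclass analogue obtained via Ramsey theory. First note the three cases are genuinely exhaustive and disjoint: a Natarajan-shattered set of size $d$ gives a shattered Littlestone tree of depth $d$, so $\ND{\cH}\le\LD{\cH}$, and the cases are $\ND{\cH}=\infty$, $\ND{\cH}<\infty=\LD{\cH}$, and $\LD{\cH}<\infty$. For \cref{item:multiclass-trichotomy-unbounded}, fix a Natarajan-shattered set $\{x_1,\dots,x_n\}\subseteq\cX$ witnessed by $f_0,f_1$ with $f_0(x)\neq f_1(x)$ for all $x$. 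The adversary presents $x_1,\dots,x_n$ and, on seeing $\hat y_t$, picks $y_t\in\{f_0(x_t),f_1(x_t)\}\setminus\{\hat y_t\}$ (nonempty since $f_0(x_t)\neq f_1(x_t)$); setting $A=\{x_t: y_t=f_1(x_t)\}$, the whole label sequence agrees with some $h\in\cH$ by Natarajan shattering, so it is $\cH$-realizable, and $n$ mistakes are forced. For \cref{item:multiclass-trichotomy-constant}, the multiclass Standard Optimal Algorithm (\cite{DBLP:journals/jmlr/DanielySBS15}) makes at most $\LD{\cH}$ mistakes in the standard multiclass online model; the transductive adversary is weaker, so $\mistakes{\cH,n}\le\LD{\cH}$ for all $n$, and we may take $C(\cH)=\LD{\cH}$.

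For the upper bound in \cref{item:multiclass-trichotomy-logarithmic}, run a multiclass halving algorithm on the restricted class $\cH|_x$: maintain the version space $\cH_t=\{f\in\cH|_x: \forall s\le t,\ f(x_s)=y_s\}$, and at step $t$ predict $\hat y_t$ equal to a label held by a strict majority of $\cH_{t-1}$ if one exists, and arbitrary otherwise. If the revealed $y_t\neq\hat y_t$ (a mistake), then $y_t$ is held by at most half of $\cH_{t-1}$ in either case, so $|\cH_t|\le\tfrac12|\cH_{t-1}|$; since $\cH_n\neq\emptyset$ by realizability, the number of mistakes is at most $\log_2|\cH|_x|$. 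Because $\ND{\cH|_x}\le\ND{\cH}=d$ and $|x|=n$, the multiclass Sauer--Shelah (Natarajan) lemma gives $|\cH|_x|\le\binom{n}{\le d}\,k^{2d}\le(enk^2/d)^d$, hence $\log_2|\cH|_x|=\BigO{d\log(n/d)+d\log k}=\BigO{d\log(nk/d)}$ mistakes, as claimed.

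For the lower bound in \cref{item:multiclass-trichotomy-logarithmic}, the $\min\{d,n\}$ term follows exactly as in \cref{item:multiclass-trichotomy-unbounded} applied to a Natarajan-shattered set of size $\min\{d,n\}$. For the $\lfloor\log n\rfloor$ term I would introduce a \emph{multiclass threshold dimension}: say $\cH$ threshold-shatters $\{x_1,\dots,x_m\}$ if there are distinct $a,b\in\cY$ and $h_1,\dots,h_m\in\cH$ with $h_i(x_j)=a$ for $j\le i$ and $h_i(x_j)=b$ for $j>i$. Then argue in three steps. (i) Since $\LD{\cH}=\infty$, $\cH$ shatters multiclass Littlestone trees of arbitrary depth; each internal node carries an ordered pair of distinct labels, i.e.\ one of at most $k^2$ colors, and a Ramsey-type theorem for colorings of complete binary trees (e.g.\ a strong-subtree / Milliken-style statement) yields, inside a sufficiently deep shattered tree, a complete binary shattered subtree of any prescribed depth $D$ all of whose internal nodes carry the \emph{same} pair $(a,b)$. (ii) Relabeling $a\mapsto 1$, $b\mapsto 0$ turns the witnessing functions, restricted to the subtree's nodes, into a $\{0,1\}$-valued family that shatters an ordinary Littlestone tree of depth $D$; choosing $D=2^N$ and invoking (the proof of) \cref{theorem:td-ld}\ref{item:big-ld-implies-big-td} extracts a threshold-shattered set of size $N$, which un-relabels to an $(a,b)$-threshold-shattered set. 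Taking $N\ge n$, this makes the multiclass threshold dimension exceed $n$. (iii) Adapt the adversary of \cref{claim:lower-bound-td}: present this threshold set of size $\ge n$ in dyadic order, maintaining by induction consistency with some $h_a$; at each of the $\lfloor\log n\rfloor$ dyadic levels there is a fresh point $x_m$ strictly between two already-labeled points, and two past-consistent functions disagree on $x_m$ (one gives $a$, the other $b$), so the learner's single prediction cannot avoid both values and the adversary forces a mistake. This yields $\mistakes{\cH,n}\ge\lfloor\log n\rfloor$.

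The main obstacle is step (i): formulating and proving the precise Ramsey statement for edge-colored complete binary trees that guarantees a large \emph{monochromatic complete binary} subtree, and tracking how deep the original shattered Littlestone tree must be as a function of $N$ and $k$ — the depth blows up, but any finite bound suffices since $\LD{\cH}=\infty$. Everything else is a routine transfer of the binary arguments, with $\{0,1\}$ replaced throughout by the chosen pair $\{a,b\}$, the plurality vote sharpened to a strict-majority vote in the halving step, and the binary Sauer--Shelah lemma replaced by Natarajan's lemma.
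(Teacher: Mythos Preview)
Your proof is correct and largely mirrors the paper's: \cref{item:multiclass-trichotomy-unbounded,item:multiclass-trichotomy-constant}, the halving upper bound via the Natarajan--Sauer lemma (\cref{theorem:multiclass-sauer}), and the $\min\{d,n\}$ lower bound all proceed exactly as in the paper. The one genuine difference is the $\lfloor\log n\rfloor$ lower bound in \cref{item:multiclass-trichotomy-logarithmic}. The paper introduces an auxiliary \emph{multiclass threshold dimension} $\MTD{\cH}$ (\cref{definition:threshold-dim-multiclass-extended}), proves inductively that $\LD{\cH}=\infty\Rightarrow\MTD{\cH}=\infty$ (\cref{theorem:infinite-ld-implie-infinite-td}) by applying the Ramsey lemma to the $k$-coloring of tree nodes by the values of a single fixed $h\in\cH$, then pigeonholes $\MTD$ down to the ordinary two-label $\TD$ (\cref{claim:td-vs-mtd}), and finally runs the dyadic adversary (\cref{claim:multiclass-lower-bound-td}). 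Your route is more direct: you color nodes by their outgoing edge-label \emph{pairs} ($\le k^2$ colors), Ramsey out a monochromatic strong subtree whose edges are globally labeled $(a,b)$, observe that after relabeling this is a shattered \emph{binary} Littlestone tree, and then invoke the binary $\mathsf{LD}\!\to\!\mathsf{TD}$ relation (\cref{theorem:td-ld}) to get a two-label threshold set in one shot. This avoids the $\MTD$ notion and the separate pigeonhole step, at the cost of a larger Ramsey blowup ($k^2$ versus $k$ colors), which is immaterial since $\LD{\cH}=\infty$. Both routes implicitly use the fact that the subtree produced by \cref{ramsey-multi-color} is a \emph{strong} subtree (each child in $T'$ lies in the corresponding left/right subtree of its parent in $T$), which is what keeps the subtree shattered; you correctly flag this with your ``strong-subtree / Milliken-style'' remark, and it is indeed delivered by the recursive construction in the proof of \cref{lemma:ramsey-two-colors}.
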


The proof of \cref{theorem:multiclass-trichotomy-formal} uses the following generalization of the Sauer--Shelah--Perles lemma.

\begin{theorem}[\cite{DBLP:journals/ml/Natarajan89}; Corollary 5 in \cite{DBLP:journals/jct/HausslerL95}]
    \label{theorem:multiclass-sauer}
    Let $d,n,k \in \bbN$, let $\cX$ and $\cY$ be sets of cardinality $n$ and $k$ respectively, and let $\cH \subseteq \cY^\cX$ such that $\ND{\cH}\leq d$.
    Then 
    \[
        |\cH| 
        \leq 
        \sum_{i=0}^d
        \binom{n}{i}
        \binom{k+1}{2}^i
        \leq
        \left(\frac{enk^2}{d}\right)^d.
    \]
\end{theorem}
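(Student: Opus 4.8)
The plan is to prove the first (and main) inequality $|\cH|\le\sum_{i=0}^{d}\binom{n}{i}\binom{k+1}{2}^{i}$ by induction on $n=|\cX|$, proving it for all $d$ simultaneously, and then to obtain the closed form $(enk^2/d)^d$ by routine estimates. Writing $\Phi(n,d):=\sum_{i=0}^{d}\binom{n}{i}\binom{k+1}{2}^{i}$, Pascal's identity immediately gives the recursion $\Phi(n,d)=\Phi(n-1,d)+\binom{k+1}{2}\,\Phi(n-1,d-1)$, which is precisely what the induction will need. The base cases are easy: if $n=0$ then $|\cH|\le1=\Phi(0,d)$; and if $d=0$ then $\ND{\cH}\le0$ forces $|\cH|\le1$ (any two distinct functions disagree at some point $x$, and then $\{x\}$ is Natarajan-shattered via $f_0(x)=h_1(x)$, $f_1(x)=h_2(x)$), so $|\cH|\le1=\Phi(n,0)$.

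For the inductive step (now $n\ge1$, $d\ge1$), I would fix a point $x\in\cX$, set $\cX'=\cX\setminus\{x\}$ and $\cH':=\{h|_{\cX'}:h\in\cH\}$ (restrictions, duplicates merged), and for $g\in\cH'$ let $S_g:=\{h(x):h\in\cH,\ h|_{\cX'}=g\}\subseteq\cY$. Since each $h\in\cH$ is determined by the pair $(h|_{\cX'},h(x))$, we get $|\cH|=\sum_{g\in\cH'}|S_g|=|\cH'|+\sum_{g\in\cH'}\left(|S_g|-1\right)$, and because $|S_g|-1\le\binom{|S_g|}{2}$ for all $g$ (as $|S_g|\ge1$),
\[
 |\cH|\ \le\ |\cH'|+\sum_{g\in\cH'}\binom{|S_g|}{2}\ =\ |\cH'|+\sum_{\{a,b\}:\,a\ne b}\left|\cH^{x}_{\{a,b\}}\right|,
\]
where $\cH^{x}_{\{a,b\}}:=\{g\in\cH':\{a,b\}\subseteq S_g\}$ and the sum runs over the $\binom{k}{2}$ unordered label pairs.

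Next I would establish the two dimension facts. First, $\ND{\cH'}\le\ND{\cH}\le d$ (a Natarajan-shattered subset of $\cX'$ for $\cH'$ is one for $\cH$ too). Second, and this is the key point, $\ND{\cH^{x}_{\{a,b\}}}\le d-1$: if $\cH^{x}_{\{a,b\}}$ Natarajan-shatters $S\subseteq\cX'$ via witnesses $f_0,f_1\colon S\to\cY$, extend them to $S\cup\{x\}$ by $f_0(x):=a$ and $f_1(x):=b$; given $A\subseteq S\cup\{x\}$, pick $g\in\cH^{x}_{\{a,b\}}$ realizing $f_{\1(\cdot\in A)}$ on $S$, and since both $a,b\in S_g$ there is $h\in\cH$ agreeing with $g$ on $S$ and with $h(x)$ equal to $a$ or $b$ as needed to match $f_{\1(x\in A)}(x)$; hence $\cH$ Natarajan-shatters $S\cup\{x\}$, forcing $|S|+1\le d$. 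Applying the induction hypothesis over the domain $\cX'$ of size $n-1$ gives $|\cH'|\le\Phi(n-1,d)$ and $|\cH^{x}_{\{a,b\}}|\le\Phi(n-1,d-1)$ for each pair, and since $\binom{k}{2}\le\binom{k+1}{2}$, summing and invoking the Pascal recursion yields $|\cH|\le\Phi(n-1,d)+\binom{k+1}{2}\Phi(n-1,d-1)=\Phi(n,d)$, closing the induction. For the final bound, $\binom{k+1}{2}=k(k+1)/2\le k^2$; using the termwise estimate $\binom{n}{i}M^{i}\le\binom{nM}{i}$ (valid for integer $M\ge1$, since $jM\ge j$) with $M=\binom{k+1}{2}$, and then the standard tail bound $\sum_{i\le d}\binom{m}{i}\le(em/d)^d$ from \cref{theorem:sauer} with $m=nk^2\ge d$, gives $\Phi(n,d)\le(enk^2/d)^{d}$.

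I expect the main obstacle to be the second dimension fact, $\ND{\cH^{x}_{\{a,b\}}}\le d-1$, together with the accounting that precedes it: in the binary case each restriction contributes at most a single ``doubled'' coordinate, but with $k$ labels a restriction $g$ may have up to $k$ extensions, so one must bound $\sum_g(|S_g|-1)$ by a sum indexed by label-pairs of subclass cardinalities and then build a Natarajan-shattering of $S\cup\{x\}$ from one of $S$ while correctly installing the pair $(a,b)$ as the new witnesses at $x$ --- this is exactly where the $\binom{k+1}{2}$ (a slightly lossy substitute for $\binom{k}{2}$) per-coordinate factor enters, replacing the factor of $1$ in the Sauer--Shelah--Perles lemma. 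The remaining ingredients --- Pascal's identity, the termwise binomial inequality, and the Sauer-type tail estimate --- are routine.
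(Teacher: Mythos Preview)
The paper does not prove \cref{theorem:multiclass-sauer}; it simply quotes the result from \cite{DBLP:journals/ml/Natarajan89} and \cite{DBLP:journals/jct/HausslerL95} and uses it as a black box in the proof of \cref{theorem:multiclass-trichotomy-formal}. So there is nothing in the paper to compare your argument against.

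That said, your proposal is correct and is essentially the standard inductive proof. The decomposition $|\cH|=|\cH'|+\sum_g(|S_g|-1)\le|\cH'|+\sum_{\{a,b\}}|\cH^{x}_{\{a,b\}}|$ via the inequality $m-1\le\binom{m}{2}$ is exactly the right multiclass analogue of the ``doubled coordinate'' trick in the binary Sauer--Shelah--Perles proof, and your verification that $\ND{\cH^{x}_{\{a,b\}}}\le d-1$ (by adjoining $x$ to a shattered set with witnesses $a,b$) is clean and correct. One cosmetic point: after applying the termwise estimate with $M=\binom{k+1}{2}$ you actually land at $m=n\binom{k+1}{2}$, not $m=nk^2$; but since $\binom{k+1}{2}\le k^2$ and $\binom{\cdot}{i}$ is monotone in the upper argument, the subsequent bound $(em/d)^d\le(enk^2/d)^d$ goes through either way. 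The only implicit assumption is $d\le nk^2$ for the tail estimate, which is harmless here since one may always replace $d$ by $\min\{d,n\}\le nk^2$ without affecting the hypothesis $\ND{\cH}\le d$.
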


\begin{proof}[Proof of \cref{theorem:multiclass-trichotomy-formal}]
    \cref{item:multiclass-trichotomy-unbounded,item:multiclass-trichotomy-constant} and the $\min\{d,n\}$ lower bound in \cref{item:multiclass-trichotomy-logarithmic} follow similarly to the corresponding items in \cref{theorem:trichotomy}. The upper bound in \cref{item:multiclass-trichotomy-logarithmic} also follows similarly to the corresponding item in \cref{theorem:trichotomy}, except that it uses \cref{theorem:multiclass-sauer} instead of the Sauer--Shelah--Perles lemma.
    The $\left\lfloor\log(n)\right\rfloor$ lower bound in \cref{item:multiclass-trichotomy-logarithmic} follows from \cref{theorem:infinite-ld-implie-infinite-td,claim:multiclass-lower-bound-td}.
\end{proof}

\section{Combinatorics of Trees}
\label{section:tree-combinatorics}

In this section we present a simple lemma from Ramsey theory about trees that is used for proving \cref{theorem:infinite-ld-implie-infinite-td}. We start with a generalized definition of subtrees.

\begin{definition}
	Let $X$ be a finite set and let $(X, \preceq)$ be a partial order relation. For $p,c \in X$, we say that $c$ is a \underline{child} of $p$ if $p \preceq c$ and there does not exist $m \in X$ such that $p \preceq m \preceq c$. 
    We say that $z \in X$ is a \ul{leaf} if there exists no $x \in X$ such that $z \preceq x$.
    $(X, \preceq)$ is a \underline{binary tree} if every non-leaf $x \in X$ has precisely $2$ children.
    The \ul{depth} of $z \in X$ is the largest $d \in \bbN$ for which there exist distinct $x_1,\dots,x_d \in X$ such that $x_1 \preceq x_2 \preceq \cdots \preceq x_d \preceq z$.
    For $d \in \bbN$, we say that $(X, \preceq)$ is a \underline{complete binary tree of depth $d$} if $(X, \preceq)$ is a binary tree and all the leaves in $X$ have depth $d$. We say that a partial order $(X', \preceq')$ is a \underline{subtree} of $(X, \preceq)$ if $X' \subseteq X$, and $\forall a,b \in X': ~ a \preceq' b \iff a \preceq b$.
\end{definition}

\begin{lemma}[Lemma 16 in \cite{DBLP:conf/stoc/AlonLMM19}]\label{lemma:ramsey-two-colors}
	Let $p,q\in \bbR$ be non-negative such that $p+q \in \bbN$. Let $T = (X, \preceq)$ be a complete binary tree of depth $d = p+q-1$, and let $f: ~ X \rightarrow \{0,1\}$. Then at least one of the following statements holds:
    \begin{itemize}
        \item{
            $T$ has a $0$-monochromatic complete binary subtree of depth at least $p$. Namely, there exists $T' = (X', \preceq')$ such that $T'$ is a subtree of $T$, $T'$ is a complete binary tree of depth at least $p$, and $f(x) = 0$ for all $x \in X'$.
        }
        \item{
            $T$ has a $1$-monochromatic complete binary tree subtree of depth at least $q$.
        }
    \end{itemize}
\end{lemma}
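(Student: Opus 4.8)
The plan is to prove the lemma by induction on the depth $d=p+q-1$ (equivalently, on the sum $p+q$). First I would observe that it suffices to treat the case where $p$ and $q$ are positive integers: the cases $p<1$ and $q<1$ are immediate, and any non-negative real pair with integral sum reduces to a positive-integer pair with the same sum by routine rounding (for instance, replacing $(p,q)$ by $(\lceil p\rceil,\lfloor q\rfloor)$, whose sum equals $p+q$). So assume from now on that $p$ and $q$ are positive integers.

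The base case is $p=q=1$, so $d=1$ and $T$ is a single node: depending on which of the two colors $f$ assigns it, that node is by itself a $0$-monochromatic complete binary subtree of depth $1\ge p$ or a $1$-monochromatic one of depth $1\ge q$. For the inductive step, let $r$ be the root of $T$ and let $T_L,T_R$ be its two child subtrees, each a complete binary tree of depth $d-1=(p-1)+q-1=p+(q-1)-1$. Suppose $f(r)=0$; the case $f(r)=1$ is entirely symmetric, with the two colors and the roles of $p,q$ swapped. If $p=1$ then $r$ by itself is a $0$-monochromatic complete binary subtree of depth $1\ge p$ and we are done, so assume $p\ge 2$. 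Apply the induction hypothesis to $T_L$ with parameters $(p-1,q)$ — legitimate, since the depth of $T_L$ is exactly $(p-1)+q-1$ and $p-1,q\ge 1$. If this produces a $1$-monochromatic complete binary subtree of depth at least $q$, then it is also a subtree of $T$ and we are done; otherwise it produces a $0$-monochromatic complete binary subtree $S_L\subseteq T_L$ of depth at least $p-1$. Running the identical argument on $T_R$, we may likewise assume it yields a $0$-monochromatic complete binary subtree $S_R\subseteq T_R$ of depth at least $p-1$. Passing to the top $p-1$ levels, take $S_L$ and $S_R$ to have depth exactly $p-1$, and let $S=\{r\}\cup S_L\cup S_R$ with the partial order induced from $\preceq$. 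Then $S$ is a complete binary subtree of $T$ of depth $1+(p-1)=p$: its root is $r$, the two children of $r$ inside $S$ are the roots of $S_L$ and $S_R$ (which lie on opposite sides of $r$ and are hence incomparable), and the two principal subtrees of $S$ are exactly $S_L$ and $S_R$. Moreover $S$ is $0$-monochromatic, since $f(r)=0$ and $S_L,S_R$ are $0$-monochromatic. This completes the induction.

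The point that needs the most care — and what I would expect to be the crux — is the gluing step $S=\{r\}\cup S_L\cup S_R$, together with the fact that a ``subtree'' here is an arbitrary subset under the induced order (a topological-minor notion, not an induced-subgraph notion). This is precisely what lets the roots of $S_L$ and $S_R$ be arbitrary descendants of $r$ on the two sides, rather than literally the children of $r$ in $T$, so that the monochromatic subtrees returned by the recursive calls can always be attached directly under $r$. Everything else is bookkeeping with the depth parameter and a two-way case split on the color $f(r)$ of the root.
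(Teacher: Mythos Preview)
Your proof is correct and takes essentially the same approach as the paper: induction on $d$, a case split on the root's color, apply the inductive hypothesis to both child subtrees with parameters $(p-1,q)$, and glue the two returned monochromatic subtrees under the root. You are in fact more careful than the paper about several details it glosses over---the reduction to integer parameters, the boundary case $p=1$, and truncating $S_L$ and $S_R$ to the same depth before gluing so that the joined tree is genuinely complete.
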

For completeness, we include a proof of this lemma.
\vspace*{-1em}
\begin{proof}[Proof of \cref{lemma:ramsey-two-colors}]
    We prove the claim by induction on the depth $d$. The base case of $d=0$ (a tree with a single node) is immediate. For the induction step, let $a$ denote the root of $T$, and let $T_{\ell}$ and $T_{r}$ denote the subtrees of $T$ of depth $d-1$ consisting of all descendants of the left and right child of $a$ respectively. 
    Assume that $f(a) = 0$.
    If $T_{\ell}$ or $T_{r}$ contain a $1$-monochromatic subtree of depth at least $q$, then we are done. Otherwise, by the induction hypothesis, both trees contain a $0$-monochromatic subtree of depth at least $p-1$. Joining these two subtrees as children of the root $a$ yields a $0$-monochromatic subtree of depth at least $p$, as desired. The proof for the case $f(a) = 1$ is similar. 
\end{proof}
\vspace*{-0.5em}
We use the following corollary of \cref{lemma:ramsey-two-colors}.
\begin{lemma}\label{ramsey-multi-color}
	Let $k,d\in \bbN$. Let $T = (X, \preceq)$ be a complete binary tree of depth $d \in \bbN$, and let $f: ~ X \to [k]$. Then $T$ has an  $f$-monochromatic complete binary subtree $T' = (X', \preceq')$ of depth at least 
    \[
        d' = \frac{d+1}{2^{\left\lceil \log(k) \right\rceil}}.
    \]
    Namely, there exists $T'$ such that $T'$ is a subtree of $T$, $T'$ is a complete binary tree of depth at least $d'$,
    and $|\{f(a): ~ a \in X'\}| = 1$.
\end{lemma}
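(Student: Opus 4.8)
The plan is to encode the $k$ colors by $\lceil\log k\rceil$ bits and peel the bits off one at a time using \cref{lemma:ramsey-two-colors}. Set $m=\lceil\log k\rceil$ and fix an injection $\phi\colon[k]\to\{0,1\}^m$ (possible since $k\le 2^m$; the case $k=1$ is immediate since $f$ is then constant). Write $\phi^{(1)},\dots,\phi^{(m)}\colon[k]\to\{0,1\}$ for the coordinate functions of $\phi$. Because $\phi$ is injective, any subtree on which all $m$ functions $x\mapsto\phi^{(j)}(f(x))$ are constant is automatically $f$-monochromatic, so it suffices to produce a large complete binary subtree on which every one of the $m$ ``bit colorings'' $\phi^{(j)}\circ f$ is constant.

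I would construct a nested sequence $T=T_0\supseteq T_1\supseteq\dots\supseteq T_m$ of complete binary subtrees as follows. Given $T_{i-1}$, a complete binary tree of depth $D$, apply \cref{lemma:ramsey-two-colors} to the $2$-coloring $x\mapsto\phi^{(i)}(f(x))$ of the nodes of $T_{i-1}$ with the balanced choice $p=q=(D+1)/2$ (legitimate since $p+q=D+1\in\bbN$ and $D=p+q-1$). This yields a complete binary subtree $T_i$ of $T_{i-1}$ of depth at least $(D+1)/2$ on which $\phi^{(i)}\circ f$ is constant; and since $T_i$ is contained in all the earlier trees $T_0,\dots,T_{i-1}$, the colorings $\phi^{(1)}\circ f,\dots,\phi^{(i-1)}\circ f$ also remain constant on $T_i$. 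After $m$ steps, $T':=T_m$ is a complete binary subtree of $T$ (the subtree relation is transitive, directly from the definition) on which $\phi\circ f$, and hence $f$, is constant.

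It remains to lower-bound the depth of $T_m$. Writing $d_i$ for the depth of $T_i$, the construction gives $d_i\ge(d_{i-1}+1)/2=g(d_{i-1})$ for the increasing affine map $g(t)=(t+1)/2$, so $d_m\ge g^{(m)}(d)$ by monotonicity. A one-line computation with the geometric series gives the closed form $g^{(m)}(t)=t/2^m+(1-2^{-m})$, whence
\[
  d_m \;\ge\; \frac{d}{2^m}+1-\frac{1}{2^m} \;=\; \frac{d+1}{2^m}+\Bigl(1-\frac{2}{2^m}\Bigr) \;\ge\; \frac{d+1}{2^m} \;=\; d',
\]
where the last inequality uses $m\ge 1$.

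The one step that requires care is this depth accounting: one must (i) choose the parameters in each invocation of \cref{lemma:ramsey-two-colors} to be as balanced as possible, so that the guaranteed depth of the output is the same whichever of the two colors ``wins'', and (ii) verify that the $m$-fold iterate of the halving map $g$ still leaves depth at least $(d+1)/2^{\lceil\log k\rceil}$. As the computation above shows, this comes out exactly because of the additive slack built into the two-color lemma (its hypothesis is $d=p+q-1$, not $d=p+q$). Everything else — the bit-encoding and the fact that constancy of a coordinate coloring is inherited upon passing to a subtree — is routine.
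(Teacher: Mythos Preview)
Your argument is correct and follows essentially the same route as the paper's: both apply \cref{lemma:ramsey-two-colors} a total of $\lceil\log k\rceil$ times, each pass fixing one bit of the color (you via an explicit encoding $\phi$, the paper via parity $f(x)\bmod 2$) and halving the guaranteed depth; the paper packages the iteration as an induction on $b$ with $k\le 2^b$, while you unroll it and compute the closed-form iterate of $g(t)=(t+1)/2$, which is the same arithmetic. One small caveat: your ``$k=1$ is immediate'' glosses over the fact that the stated bound then reads $(d+1)/2^0=d+1$, which no subtree of a depth-$d$ tree can attain---but this is a defect of the lemma's statement rather than of your reasoning, and the paper's own proof (whose induction starts at $b=1$) has the same blind spot.
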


\begin{proof}[Proof of \cref{ramsey-multi-color}]
    We will show that for any $b \in \bbN$, if $k \leq 2^{b}$ then there exists an $f$-monochromatic subtree of $T$ of depth at least 
    \[
        \frac{d+1}{2^b}.  
    \]
    This implies the lemma, which corresponds to the special case $b = \left\lceil \log(k) \right\rceil$.

    We proceed by induction on $b$. 
    The base case $b=1$ follows from \cref{lemma:ramsey-two-colors}.
    For the induction step, we assume that the claim holds for $b$ and prove that it holds for $b+1$.
    Namely, we show that if $f: ~ X \to [k]$ and $k \leq 2^{b+1}$ then there exists an $f$-monochromatic subtree of depth at least $(d+1)/2^{b+1}$.
    
    Define $g: ~ X \to \{1,2\}$ by $g(x) = 1+(f(x) ~ \operatorname{mod} ~ 2)$.
    By \cref{lemma:ramsey-two-colors}, there exists a $g$-monochromatic complete binary subtree $T_0 = (X_0, \preceq)$ of $T$ of depth at least $(d+1)/2$. 
    In particular $\left|\{f(x): ~ x \in X_0\}\right| \leq 2^b$.
    By invoking the induction hypotheses on $T_0$, there exists a complete binary subtree of $T_0$ that is $f$-monochromatic and has depth at least
    \[
        \frac{\frac{d+1}{2}+1}{2^b} 
        >
        \frac{d+1}{2^{b+1}},
    \]
    as desired.
\end{proof}

\section{Multiclass Threshold Bounds}
\label{section:multiclass-threshold-bounds}

\begin{definition}\label{definition:threshold-dim-multiclass}
    Let $\cX$ and $\cY$ be sets, let $X = \{x_1,\dots,x_t\} \subseteq \cX$, and let $\cH \subseteq \cY^\cX$. We say that $X$ is \ul{threshold-shattered} by $\cH$ if there exist distinct $y_0,y_1 \in \cY$ and functions $h_1,\dots,h_t \in \cH$ such that $h_i(x_j) = y_{\1(j \leq i)}$. The \ul{threshold dimension} of $\cH$, denoted $\TD{\cH}$, is the supremum of the set of integers $t$ for which there exists a threshold-shattered set of cardinality $t$. 
\end{definition}

We introduce the following generalization of the threshold dimension.

\begin{definition}\label{definition:threshold-dim-multiclass-extended}
    Let $\cX$ and $\cY$ be sets, let $X = \{x_1,\dots,x_t\} \subseteq \cX$, and let $\cH \subseteq \cY^\cX$. We say that $X$ is \ul{multi-class threshold-shattered} by $\cH$ if there exist $y_1,y_1'\dots,y_t,y_t' \in \cY$ such that $y_i \neq y_j'$ for all $i,j \in [t]$,
    and there exist functions $h_1,\dots,h_t \in \cH$ such that 
    \begin{equation*}
        \label{eq:mtd-definition}
        h_i(x_j)
        =
        \left\{
        \begin{array}{ll}
            y_i & (j \leq i) \\
            y_j' & (j > i).
        \end{array}
        \right.
    \end{equation*}
    The \ul{multi-class threshold dimension} of $\cH$, denoted $\MTD{\cH}$, is the supremum of the set of integers $t$ for which there exists a threshold-shattered set of cardinality $t$. 
\end{definition}

See \cref{tabel:threshold-dim-multiclass} for an illustration of this definition.

\begin{table}[!ht]
    \centering
    \begin{tabular}{c|c|c|c|c|c}
           & $x_1$ & $x_1$ & $x_3$ & $x_4$ & $x_5$ \\ \hline
    $h_1$  & {\color{Blue}     $y_1$} & {\color{BrickRed} $\mathbf y _2'$}  & {\color{BrickRed} $\mathbf y _3'$}    & {\color{BrickRed} $\mathbf y _4'$} & {\color{BrickRed} $\mathbf y _5'$} \\ \hline
    $h_2$  & {\color{Blue}     $y_2$} & {\color{Blue}     $y_2$ }           & {\color{BrickRed} $\mathbf y _3'$}    & {\color{BrickRed} $\mathbf y _4'$} & {\color{BrickRed} $\mathbf y _5'$} \\ \hline
    $h_3$  & {\color{Blue}     $y_3$} & {\color{Blue}     $y_3$ }           & {\color{Blue}     $y_3$ }             & {\color{BrickRed} $\mathbf y _4'$} & {\color{BrickRed} $\mathbf y _5'$} \\ \hline
    $h_4$  & {\color{Blue}     $y_4$} & {\color{Blue}     $y_4$ }           & {\color{Blue}     $y_4$ }             & {\color{Blue}     $y_4$ }          & {\color{BrickRed} $\mathbf y _5'$} \\ \hline
    $h_5$  & {\color{Blue}     $y_5$} & {\color{Blue}     $y_5$ }           & {\color{Blue}     $y_5$ }             & {\color{Blue}     $y_5$ }          & {\color{Blue}     $y_5$ } \\ \hline
    \end{tabular}
    \vsp
    \caption{An illustration of \cref{definition:threshold-dim-multiclass-extended}. The table shows a collection of points $\{x_1,\dots,x_5\}$ that are multi-class threshold shattered by functions $\{h_1,\dots,h_5\}$.}
    \label{tabel:threshold-dim-multiclass}
\end{table}

\begin{claim}\label{claim:td-vs-mtd}
    Let $\cX$ and $\cY$ be sets, $k = |\cY| < \infty$, and let $\cH \subseteq \cY^\cX$. Then $\TD{\cH} \geq \lfloor \MTD{\cH}/k^2 \rfloor$.
\end{claim}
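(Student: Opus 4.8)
The plan is to reduce the multi-class threshold pattern witnessing $\MTD{\cH}$ to an ordinary two-label threshold pattern by a pigeonhole argument on the ``diagonal'' labels $y_i$ and the ``super-diagonal'' labels $y_i'$.

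First I would fix a multi-class threshold-shattered set realizing $\MTD{\cH}$. Suppose $\MTD{\cH} = m < \infty$ (the case $\MTD{\cH} = \infty$ is handled by running the argument below for arbitrarily large finite $m$, which forces $\TD{\cH} = \infty$). So there are points $x_1,\dots,x_m \in \cX$, labels $y_1,y_1',\dots,y_m,y_m' \in \cY$ with $y_i \neq y_j'$ for all $i,j \in [m]$, and functions $h_1,\dots,h_m \in \cH$ with $h_i(x_j) = y_i$ for $j \leq i$ and $h_i(x_j) = y_j'$ for $j > i$.

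Next, color the index set $[m]$ by the pair $(y_i, y_i') \in \cY \times \cY$. Since $|\cY| = k$, this uses at most $k^2$ colors, so by pigeonhole there is a monochromatic $S \subseteq [m]$ with $|S| \geq \lceil m/k^2 \rceil \geq \lfloor m/k^2 \rfloor$; on $S$ the value $y_i$ equals a common label $y^\ast$ and $y_i'$ equals a common label $\bar y$. Crucially $y^\ast \neq \bar y$, since $y_i \neq y_j'$ holds for all pairs $i,j$ (in particular for $i,j \in S$) --- this is exactly the distinctness required by the definition of $\TD$. Now restrict to $S$: writing $S = \{i_1 < \cdots < i_s\}$ and setting $\tilde x_a = x_{i_a}$, $\tilde h_a = h_{i_a}$, we have $b \leq a \iff i_b \leq i_a$, hence $\tilde h_a(\tilde x_b) = h_{i_a}(x_{i_b})$ equals $y^\ast$ when $b \leq a$ and $\bar y$ when $b > a$, i.e. $\tilde h_a(\tilde x_b) = y_{\1(b \leq a)}$ with the two distinct labels $y_1 := y^\ast$ and $y_0 := \bar y$. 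Thus $\{\tilde x_1,\dots,\tilde x_s\}$ is threshold-shattered, giving $\TD{\cH} \geq s \geq \lfloor \MTD{\cH}/k^2 \rfloor$.

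There is no genuine obstacle here; the only points needing care are (i) verifying that passing to an order-preserving subset of indices really does preserve the threshold pattern, which is why I re-index explicitly above; (ii) using the hypothesis $y_i \neq y_j'$ \emph{for all pairs} (not just $i=j$) so that the two surviving labels are distinct; and (iii) treating $\MTD{\cH} = \infty$ by applying the bound for every finite $m$.
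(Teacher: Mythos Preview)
Your argument is correct and is essentially the same pigeonhole reduction the paper has in mind; the paper's proof is the one-liner ``the proof follows from two applications of the pigeonhole principle,'' which amounts to first thinning so that all $y_i$ coincide and then so that all $y_i'$ coincide, whereas you achieve both in a single step by coloring with the pair $(y_i,y_i')$. The two routes are equivalent and yield the same $k^2$ loss.
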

\vspace*{-1em}
\begin{proof}[Proof of \cref{claim:td-vs-mtd}]

    The proof follows from two applications of the pigeonhole principle.
\end{proof}

\begin{claim}\label{claim:multiclass-lower-bound-td}
    Let $\cX$ and $\cY$ be sets, let $\cH \subseteq \cY^\cX$ such that $d = \TD{\cH} < \infty$, and let $n \in \bbN$.
    Then 
    \[
        \mistakes{\cH,n} \geq 
        \min\left\{
            \left\lfloor\log(d)\right\rfloor,
            \left\lfloor\log(n)\right\rfloor
        \right\}.
    \]
\end{claim}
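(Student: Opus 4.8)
The plan is to carry over the proof of \cref{claim:lower-bound-td} essentially verbatim, replacing the labels $0,1$ by the two distinct labels $y_0,y_1 \in \cY$ supplied by \cref{definition:threshold-dim-multiclass}. Put $k = \min\{\lfloor\log(d)\rfloor,\lfloor\log(n)\rfloor\}$ and $N = 2^k$. Since $k \le \log(d)$ we have $N \le d = \TD{\cH}$, so there is a threshold-shattered set $\{z_1,\dots,z_N\}$ with distinct labels $y_0,y_1$ and functions $g_1,\dots,g_N \in \cH$ satisfying $g_i(z_j) = y_{\1(j \le i)}$. Discarding $z_1$ and setting $x_j := z_{j+1}$ for $j \in [N-1]$ and $h_i := g_{i+1}$ for $i \in \{0,1,\dots,N-1\}$, we obtain points $x_1,\dots,x_{N-1} \in \cX$ and functions $h_0,\dots,h_{N-1} \in \cH$ with $h_i(x_j) = y_{\1(j \le i)}$ for all $0 \le i \le N-1$ and $j \in [N-1]$; in particular $h_0 \equiv y_0$ and $h_{N-1} \equiv y_1$ on $\{x_1,\dots,x_{N-1}\}$. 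Starting from a size-$N$ (rather than size-$(N-1)$) threshold-shattered set is what makes $h_0$ available, which is needed below. The adversary then presents $x_1,\dots,x_{N-1}$ in the dyadic, breadth-first order $q = q_1 \circ \cdots \circ q_k$ of \cref{claim:lower-bound-td}; since $k \le \log(n)$ we have $N-1 < n$, and if $n > N-1$ the adversary simply appends already-seen instances, which cannot decrease the number of forced mistakes.

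The induction is identical to \cref{claim:lower-bound-td}: for every $i \in [k]$, all labels chosen on the subsequences preceding $q_i$ are $\cH$-realizable, and the adversary can force a mistake on some instance of $q_i$. In the base case, on the first presented instance $x_{N/2}$ both $y_0$ and $y_1$ are $\cH$-realizable (via $h_0$ and $h_{N-1}$, respectively), so the adversary forces a mistake by playing whichever of $y_0,y_1$ differs from the learner's prediction; such a label always exists because $|\{y_0,y_1\}|=2$, and in particular the adversary wins trivially if the learner predicts a label outside $\{y_0,y_1\}$. For the inductive step, the same dyadic/gap argument applies: the labels chosen so far are consistent with some threshold function $h_a$, and the dyadic structure of the order guarantees that $q_i$ contains an instance $x_m$ together with a threshold function $h_m$ such that $h_a$ and $h_m$ agree on every previously labeled instance while $h_a(x_m) \ne h_m(x_m)$; since those two values are exactly $y_0$ and $y_1$ in some order, the adversary can force a mistake on $x_m$ while staying $\cH$-realizable. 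Applying this with $i = k$ yields at least $k$ forced mistakes, i.e., $\mistakes{\cH,n} \ge k = \min\{\lfloor\log(d)\rfloor,\lfloor\log(n)\rfloor\}$.

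There is no genuinely new difficulty beyond the binary case; the only two points worth a line of care are (i) the passage from a size-$N$ threshold-shattered set to a family of $N$ thresholds indexed $0,\dots,N-1$ over $N-1$ points, so that both candidate labels are simultaneously realizable at the first gap, and (ii) the observation that permitting the learner to predict labels outside $\{y_0,y_1\}$ cannot help it, since any such prediction is automatically wrong under whichever of $y_0,y_1$ the adversary commits to. Both are immediate, so the multiclass statement follows by the same reasoning that proves \cref{claim:lower-bound-td}.
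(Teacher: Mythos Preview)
Your proposal is correct and takes exactly the approach the paper intends: the paper's own ``proof'' of \cref{claim:multiclass-lower-bound-td} is simply the one-line remark that it is similar to \cref{claim:lower-bound-td}, and you have carried out that transfer carefully, replacing $\{0,1\}$ by $\{y_0,y_1\}$. Your two added observations --- starting from a size-$N$ threshold-shattered set so that an ``all-$y_0$'' witness $h_0$ is available for the base case, and noting that a learner prediction outside $\{y_0,y_1\}$ is automatically a mistake --- are genuine refinements that make the argument airtight (the first point in fact patches a small edge case at $k=1$ that is glossed over even in the binary proof), but they do not constitute a different route.
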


The proof of \cref{claim:multiclass-lower-bound-td} is similar to that of \cref{claim:lower-bound-td}.

\begin{theorem}\label{theorem:infinite-ld-implie-infinite-td}
    Let $\cX$ and $\cY$ be sets with $k = |\cY| < \infty$, let $\cH \subseteq \cY^\cX$. If $\LD{\cH} = \infty$ then $\MTD{\cH} = \infty$.
\end{theorem}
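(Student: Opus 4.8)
The plan is to reduce \cref{theorem:infinite-ld-implie-infinite-td} to the binary relationship between the Littlestone and threshold dimensions (\cref{theorem:td-ld}) by using the tree Ramsey lemma (\cref{ramsey-multi-color}) to ``monochromatize'' a deep Littlestone tree, and then to upgrade the resulting binary threshold system into a multi-class threshold system by one further pigeonhole step. Fix an arbitrary target $t \in \bbN$; it suffices to exhibit a multi-class threshold-shattered set of size at least $t$. Since $\LD{\cH} = \infty$, fix a shattered Littlestone tree $T$ of depth $D$, where $D$ is a sufficiently large function of $t$ and $k$ to be pinned down at the end.

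I would first monochromatize $T$ and reduce to the binary case. Viewing $T$ as a complete binary tree and coloring each node $x_u$ by the ordered pair of its branch-labels $(y_{u\circ 0},y_{u\circ 1}) \in \cY \times \cY$ (at most $k^2$ colors), \cref{ramsey-multi-color} yields a monochromatic complete binary subtree $T'$ of depth at least $(D+1)/2^{\lceil \log(k^2)\rceil}$; as is evident from the inductive proofs of \cref{lemma:ramsey-two-colors,ramsey-multi-color}, $T'$ can be taken to respect the left--right structure of $T$ (each node of $T'$ has one child in its left $T$-subtree and one in its right $T$-subtree). Writing $(\alpha,\beta)$, $\alpha \neq \beta$, for the common color, $T'$ is then a Littlestone tree with uniform branch-labels --- descending toward a node's left $T'$-child forces value $\alpha$ there, toward its right child forces $\beta$ --- and $T'$ is shattered by $\cH$, since every root-to-leaf path of $T'$ is a $\preceq_T$-chain that extends to a root-to-leaf path of $T$ whose witness in $\cH$ realizes it. Now set $\tilde\cH := \{\, w \mapsto \1(h(w)=\beta) : h \in \cH \,\}$ over the node set of $T'$; then $\tilde\cH$ shatters the binary Littlestone tree $T'$, so $\LD{\tilde\cH}$ is at least the depth of $T'$, and \cref{item:big-ld-implies-big-td} in \cref{theorem:td-ld} gives $\TD{\tilde\cH} \geq r$ with $r = \Omega(\log D - \log k)$. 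Unwinding the definitions, there are distinct nodes $w_1,\dots,w_r$ of $T'$ (in threshold order) and functions $h_1,\dots,h_r \in \cH$ such that $h_i(w_j) = \beta$ whenever $j \leq i$ and $h_i(w_j) \in \cY \setminus \{\beta\}$ whenever $j > i$ --- a threshold-shaped pattern, except that the above-diagonal entries are only known to avoid $\beta$ and may depend on $i$.

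Finally I would make the above-diagonal entries column-constant by a greedy pigeonhole, oriented so the surviving entries depend on the larger index. Put $i_1 := r$ and let $C_1 \subseteq \{1,\dots,r-1\}$ be a largest set on which $i \mapsto h_i(w_{i_1})$ is constant; put $i_2 := \max C_1$ and let $C_2 \subseteq C_1 \setminus \{i_2\}$ be a largest set on which $i \mapsto h_i(w_{i_2})$ is constant; and so on. Each step shrinks the candidate set by a factor at most $k-1$, so the process runs for $s = \Omega(\log r / \log k)$ steps, producing $S = \{i_1 > i_2 > \cdots > i_s\}$ such that for all $a < b$ one has $h_{i_b}(w_{i_a}) = \gamma_{i_a}$ for some $\gamma_{i_a} \in \cY \setminus \{\beta\}$ depending only on $i_a$. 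Re-indexing the points $\{w_i : i \in S\}$ in threshold order as $v_1,\dots,v_s$ with the correspondingly indexed functions $g_1,\dots,g_s$, this says exactly that $g_m(v_\ell) = \beta$ for $\ell \leq m$ and $g_m(v_\ell) = z_\ell \neq \beta$ for $\ell > m$, where $z_\ell$ depends only on $\ell$. Hence $\{v_1,\dots,v_s\}$ is multi-class threshold-shattered in the sense of \cref{definition:threshold-dim-multiclass-extended} (all row-values equal to $\beta$, column-values $z_\ell$, and $\beta \neq z_\ell$). Since $s \to \infty$ as $D \to \infty$ (crudely $s = \Omega(\log\log D / \log k)$), taking $D$ large enough gives $s \geq t$; as $t$ was arbitrary, $\MTD{\cH} = \infty$.

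I expect the main obstacle to be the interface between the binary theorem and the multi-class conclusion: \cref{theorem:td-ld} certifies only that the above-diagonal entries differ from $\beta$, not that they are constant along columns, so the multi-class threshold structure is not obtained for free --- it costs the extra pigeonhole, which moreover must be set up so that the surviving off-diagonal entries depend on the column (point) index rather than the row (function) index, to match \cref{definition:threshold-dim-multiclass-extended}. A secondary point requiring care is verifying that the monochromatic subtree produced by \cref{ramsey-multi-color} can be taken to respect the left--right structure of $T$, which is exactly what makes $T'$ a genuine binary Littlestone tree.
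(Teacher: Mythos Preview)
Your argument is correct, but it proceeds quite differently from the paper's. The paper runs an induction directly on the multi-class structure: given $\LD{\cH}=d$, it fixes a single $h\in\cH$, applies \cref{ramsey-multi-color} with the $k$-coloring $x\mapsto h(x)$ to obtain an $h$-monochromatic subtree $T'$ of depth roughly $d/(2k)$, peels off the root $x$ of $T'$ together with $h$ as one new row/column of the multi-class threshold table (using an edge label $y'\neq h(x)$ at the root), and then recurses into $\cH'=\{g:g(x)=y'\}$, which still shatters a deep subtree. This yields the recursion $f_k(d)\ge 1+f_k(\lceil d/2k\rceil-1)$, hence $\MTD{\cH}=\Omega(\log d/\log k)$.

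Your route is more modular: you invoke Ramsey once, but on the $k^2$-coloring by edge-label pairs, so that the surviving subtree is a genuine binary Littlestone tree; you then black-box \cref{theorem:td-ld} to get a threshold system of size $r=\Omega(\log d)$, and finally run a greedy pigeonhole over the $k-1$ possible above-diagonal values to force column-constancy, matching \cref{definition:threshold-dim-multiclass-extended}. The trade-off is quantitative: the final pigeonhole costs you a logarithm, so you end with $\MTD{\cH}=\Omega(\log\log d/\log k)$ rather than $\Omega(\log d/\log k)$. For the stated theorem (only that $\MTD{\cH}=\infty$) this is immaterial, and your reduction to the binary case is arguably cleaner conceptually; the paper's direct induction buys the sharper rate. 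Your two flagged subtleties---that the Ramsey subtree respects left/right (so that $T'$ really is a shattered Littlestone tree) and that the greedy selection must be oriented to make the surviving off-diagonal values column-indexed---are exactly the points that need care, and you handle both correctly.
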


\begin{proof}[Proof of \cref{theorem:infinite-ld-implie-infinite-td}]
    Let $f_k(d)$ be the largest number such that every class with Littlestone dimension $d$ has multi-class threshold dimension at least $f_k(d)$.    
    We show by induction on $d$ that $f_k$ satisfies the following recurrence relation: 
    \[
        f_k(d) \geq
        \left\{
        \begin{array}{ll}
            1  & d = 1 \\
            1+f_k(\lceil d/2k \rceil -1) & d > 1
        \end{array}.
        \right.    
    \]
    In particular, this implies that $f_k(d) \xrightarrow{d \to \infty} \infty$, which implies the theorem.

    The base case $d = \LD{\cH} = 1$ is immediate.
    For the induction step, we assume the relation holds whenever $\LD{\cH} \in [d-1]$, and prove that it holds for $\LD{\cH} = d$. 
    Let $T$ be a Littlestone tree of depth $d$ that is shattered by $\cH$. 
    Fix $h \in \cH$.
    Then $h$ is a $k$-cloring of the nodes of $T$.
    By \cref{ramsey-multi-color}, there exists an $h$-monochromatic subtree $T' \subseteq T$ of depth at least $\left\lceil d/2k \right\rceil$. Let $y$ be the color assigned by $h$ to all nodes of $T'$.
    $T'$ is shattered by $\cH$, so there exists a child $c$ of the root $x$ of $T'$ such that edge from $x$ to $c$ is labeled by some value $y' \neq y$. 
    Let $\cH' = \{g \in \cH: ~ g(x) = y'\}$. 
    $\cH'$ shatters the subtree rooted at $c$, so $\LD{\cH'} \geq \left\lceil d/2k \right\rceil - 1$. By the induction hypothesis, there exist $x_1,\dots,x_s$ for $s = f_k(\left\lceil d/2k \right\rceil -1)$ that are multi-class threshold shattered by functions $h_1,\dots,h_s \in \cH'$.

    By construction, the set $X = \{x_1,\dots,x_s,x_{s+1} = x\}$ is multi-class threshold shattered by $\{h_1,\dots,h_s,h_{s+1} = h\}$, because $h_{s+1}(x_j) = y$ for all $j \in [s+1]$, and $h_i(x_{s+1}) = y'$ for all $i \in [s]$.
    Hence, $f_k(d) \geq s+1 = 1 + f_k(\left\lceil d/2k \right\rceil -1)$, as desired.
\end{proof}

\section{Proof of Agnostic Lower Bound}
\label{section:proof-agnostic-lower-bound}

The lower bound in \cref{theorem:agnostic-transductive-bound} is derived using an anti-concentration technique from Lemma 14 of \cite{DBLP:conf/colt/Ben-DavidPS09}. Specifically, this technique uses the following inequality.

\begin{theorem}[Khinchine's inequality; Lemma 8.2 in \cite{DBLP:books/daglib/0016248}]
    \label{theorem:khinchines-inequality} 
    Let $k \in \bbN$, and let $\sigma_1,\sigma_2,\dots,\sigma_k$ be random variables sampled independently and uniformly at random from $\{\pm1\}$. Then 
    \[
        \EE{\left|\sum_{i \in k} \sigma_k\right|} \geq \sqrt{k/2}.   
    \]  
\end{theorem}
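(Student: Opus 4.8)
The plan is to prove the equal‑weight case of Khinchine's inequality stated here (which is exactly what the agnostic lower bound uses) by computing $\EE{|S_k|}$ essentially exactly, where $S_k=\sum_{i=1}^k\sigma_i$ is the Rademacher sum, and then feeding in one sharp estimate for the central binomial coefficient. The first step is a one‑line conditional‑expectation computation: conditioning on $\sigma_k$ and using that for every integer $s$ one has $\tfrac12|s+1|+\tfrac12|s-1|=\max(|s|,1)$, we get $\EE{|S_k|}=\EE{\max(|S_{k-1}|,1)}=\EE{|S_{k-1}|}+\Pr[S_{k-1}=0]$, since $S_{k-1}$ is integer‑valued. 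Unrolling this recursion from $\EE{|S_0|}=0$ gives the clean identity $\EE{|S_k|}=\sum_{j=0}^{k-1}\Pr[S_j=0]$.

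Next I would substitute $\Pr[S_{2i}=0]=\binom{2i}{i}4^{-i}$ and $\Pr[S_{2i+1}=0]=0$, so that $\EE{|S_k|}=\sum_{i=0}^{\lfloor(k-1)/2\rfloor}\binom{2i}{i}4^{-i}$. The quantitative heart of the argument is the bound $\binom{2i}{i}4^{-i}\ge\frac{1}{2\sqrt i}$ for all $i\ge1$ (with the $i=0$ term equal to $1$). I would establish this by showing the sequence $a_i:=\sqrt i\,\binom{2i}{i}4^{-i}$ is nondecreasing: a direct computation gives $(a_{i+1}/a_i)^2=\frac{(2i+1)^2}{4i(i+1)}=1+\frac{1}{4i(i+1)}\ge1$, so $a_i\ge a_1=\tfrac12$. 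Finally, an integral comparison $\sum_{i=1}^{N}\tfrac{1}{\sqrt i}\ge\int_1^{N+1}\tfrac{dx}{\sqrt x}=2(\sqrt{N+1}-1)$ yields, with $N=\lfloor(k-1)/2\rfloor$,
\[
    \EE{|S_k|}\;\ge\;1+\Big(\sqrt{N+1}-1\Big)\;=\;\sqrt{\big\lfloor(k+1)/2\big\rfloor}\;\ge\;\sqrt{k/2},
\]
and the tiny cases $k\in\{1,2\}$ are checked by hand ($k=2$ gives equality, $\EE{|S_2|}=1$).

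The only non‑routine point — hence the main obstacle — is the monotonicity of $i\mapsto\sqrt i\,\binom{2i}{i}4^{-i}$; everything else is bookkeeping. It is worth flagging \emph{why} one cannot avoid this: the constant $1/\sqrt2$ is optimal already for equal weights (the $k=2$ instance is tight), this being the equal‑coefficient case of Szarek's theorem on the sharp $L^1$ Khinchine constant, so a cruder second‑ or fourth‑moment argument (which would only deliver $\sqrt{k/3}$ or worse, via Hölder with $\EE{S_k^4}=3k^2-2k$) genuinely does not suffice. If the paper needed the inequality for arbitrary coefficients $a_1,\dots,a_n$ rather than all‑ones, one would instead invoke the general sharp bound; but the equal‑weight statement proved above is all that is used downstream.
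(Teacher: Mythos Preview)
The paper does not prove this statement at all: it is quoted as Lemma~8.2 of an external reference and used as a black box in the agnostic lower bound. So there is no ``paper's own proof'' to compare against; your proposal supplies a self-contained argument where the paper simply cites one.

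Your argument is correct. The recursion $\EE{|S_k|}=\EE{|S_{k-1}|}+\Pr[S_{k-1}=0]$ via $\tfrac12|s+1|+\tfrac12|s-1|=\max(|s|,1)$ for integer $s$ is valid, the identification $\EE{|S_k|}=\sum_{i\le\lfloor(k-1)/2\rfloor}\binom{2i}{i}4^{-i}$ is standard, the monotonicity of $a_i=\sqrt{i}\,\binom{2i}{i}4^{-i}$ via $(a_{i+1}/a_i)^2=\frac{(2i+1)^2}{4i(i+1)}\ge1$ is a clean way to get $\binom{2i}{i}4^{-i}\ge\tfrac{1}{2\sqrt i}$, and the integral comparison closes the bound. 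The cases $k\in\{1,2\}$ are in fact already covered by your chain (with $N=0$ the sum is empty and you get $\EE{|S_k|}\ge 1\ge\sqrt{k/2}$), so the separate hand check is harmless but unnecessary. Your remark that the $L^2$--$L^4$ H\"older route only yields $\sqrt{k/3}$, and hence cannot recover the sharp constant $1/\sqrt2$ (attained at $k=2$), is a useful observation explaining why a more careful computation is needed.
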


\begin{proof}[Proof of lower bound in \cref{theorem:agnostic-transductive-bound}]
    Let $d = \VC{\cH}$.
    Let $\{x_1^*,\dots,x_d^*\} \subseteq \cX$ be a set of cardinality $d$ that is $\mathsf{VC}$-shattered by $\cH$. Let $k \in \bbN$ be the largest integer such that $kd \leq n$. 
    
    Let $x \in \cX^n$ be a sequence consisting of $k$ copies of the shattered set, namely, 
    \[
        \left(x_1,\dots,x_{kd}\right) = \left(x_{1}^{1}, x_{1}^{2},\dots,x_{1}^{k},x_{2}^{1},x_{2}^{2},\dots,x_{2}^{k},\dots,x_{d}^{1},x_{d}^{2},\dots,x_{d}^{k}\right),
    \]
    such that $x_i^j = x_i^*$ for all $i \in [d]$ and $j \in [k]$. If $kd < n$ then the remaining $n-kd$ elements of $x$ may be arbitrary. 
    
    Consider a randomized adversary that selects the sequence $x$, and chooses all labels to be i.i.d.\ uniform random bits. 
    For each $i \in [d]$ and $j \in [k]$, let $y_i^j = y_{(i-1)k+j}$ and $\hat{y}_i^j = \hat{y}_{(i-1)k+j}$ denote, respectively, the labels and the predictions corresponding to $x_i^j$. Then for any learner strategy $A$, 
    \newcommand{\longeqtagexplainindent}{2.6em}
    \begingroup
    \allowdisplaybreaks
    \begin{flalign}
        & \EEEunder{y \sim \uniform{\{0,1\}^n}}{
            \regret{A, \cH, x, y}
        } 
        &
        \nonumber
        \\
        &\qquad 
        = 
        \EEEunder{y \sim \uniform{\{0,1\}^n}}{
            \EEEunder{\hat{y}}{
                \sum_{i \in [n]}\1\left(\hat{y}_t \neq y_t\right)
            } 
        -
            \min_{h \in \cH}
            \sum_{i \in [n]}\1\left(h(x_t) \neq y_t\right)
        } 
        &
        \nonumber
        \\
        &\qquad 
        = 
        \frac{n}{2}
        -
        \EEEunder{y \sim \uniform{\{0,1\}^n}}{
            \min_{h \in \cH}
            \sum_{i \in [n]}\1\left(h(x_t) \neq y_t\right)
        } 
        &
        \tagexplainhspaceext{$y_i \: \bot \: \hat{y}_i$}{\longeqtagexplainindent}
        \nonumber
        \\
        &\qquad 
        \geq
        \frac{kd}{2} 
        -
        \EEEunder{y \sim \uniform{\{0,1\}^{kd}}}{
            \min_{h \in \cH}\sum_{i \in [d]} \sum_{j \in [k]}\1\left(h(x_i^j) \neq y_i^j\right)
        }
        &
        \label{eq:regret-monotone}
        \\
        &\qquad 
        =\frac{kd}{2} -
        \EEEunder{y \sim \uniform{\{0,1\}^{kd}}}{
            \sum_{i \in [d]} \min_{h \in \cH} \sum_{j \in [k]}\1\left(h(x_i^j) \neq y_i^j\right)
        }
        &
        \tagexplainhspaceext{$\cH$ shatters $\{x_1^*,\dots,x_d^*\}$}{\longeqtagexplainindent}
        \nonumber
        \\
        &\qquad 
        =
        \sum_{i = 1}^d \frac{k}{2}
        -
        \EEEunder{y_i \sim \uniform{\{0,1\}^k}}{
            \min_{h \in \cH}\sum_{j \in [k]}\1\left(h(x_i^j) \neq y_i^j\right)
        } 
        \nonumber
        \\
        &\qquad 
        =
        \sum_{i = 1}^d
        \frac{k}{2}
        -
        \EEEunder{y_i \sim \uniform{\{0,1\}^k}}{
            \min\{r_i,k-r_i\}
        } 
        &
        \tagexplainhspaceext{Let $r_i = \sum_{j \in [k]} y_i^j$}{\longeqtagexplainindent}
        \nonumber
        \\
        &\qquad 
        =
        \sum_{i = 1}^d 
        \EEEunder{y_i \sim \uniform{\{0,1\}^k}}{
            \left|\frac{k}{2} - r_i\right| 
        }
        &
        \nonumber
        \\
        &\qquad 
        =
        \sum_{i = 1}^d 
        \EEEunder{y_i \sim \uniform{\{0,1\}^k}}{
            \left|\frac{k}{2} - \sum_{j \in [k]}\left(\frac{1}{2} + \frac{\sigma_i^j}{2}\right)\right| 
        }
        &
        \tagexplainhspaceext{Let $\sigma_i^j =
        \left\{
        \begin{array}{ll}
                1  & y_i^j = 1 \\
                -1 & y_i^j = 0
        \end{array}
        \right.$}{\longeqtagexplainindent}
        \nonumber
        \\
        &\qquad 
        =
        \frac{1}{2}\sum_{i = 1}^d 
        \EEEunder{y_i \sim \uniform{\{0,1\}^k}}{
            \left|\sum_{j \in [k]}\sigma_i^j\right| 
        }
        &
        \nonumber
        \\
        &\qquad 
        \geq
        \frac{1}{2}\sum_{i = 1}^d 
        \sqrt{\frac{k}{2}} = \frac{d\sqrt{k}}{2\sqrt{2}}
        = \OmegaOf{\sqrt{nd}},
        &
        \label{eq:agnostic-lower-bound-end}
    \end{flalign}
    \endgroup
    where the final inequality is Khinchine's inequality (\cref{theorem:khinchines-inequality}). To see that Inequality~(\ref{eq:regret-monotone}) holds, let $h^* \in \argmin_{h \in \cH}
    \sum_{i = 1}^{kd}\1\left(h(x_t) \neq y_t\right)$, and then 
    \begin{flalign*}
        &
        \EEEunder{y \sim \uniform{\{0,1\}^n}}{
            \min_{h \in \cH}
            \sum_{i \in [n]}\1\left(h(x_t) \neq y_t\right)
        }
        &
        \\
        &
        \qquad
        \leq 
        \EEEunder{y \sim \uniform{\{0,1\}^{n}}}{
            \sum_{i = 1}^{kd}\1\left(h^*(x_t) \neq y_t\right)
        }
        +
        \EEEunder{y \sim \uniform{\{0,1\}^{n}}}{
            \sum_{i = kd+1}^n\1\left(h^*(x_t) \neq y_t\right)
        }
        &
        \\
        &
        \qquad
        \leq 
        \EEEunder{y \sim \uniform{\{0,1\}^{n}}}{
            \sum_{i = 1}^{kd}\1\left(h^*(x_t) \neq y_t\right)
        }
        +
        \frac{n-kd}{2}.
        &
        \tagexplainhspaceext{$\{y_t\}_{t > kd} \: \bot \: h^*$}{2.5em}
    \end{flalign*}
    In particular, \cref{eq:agnostic-lower-bound-end} implies that for every learner strategy $A$ there exists $y \in \{0,1\}^n$ such that $\regret{A, \cH, x, y} = \OmegaOf{\sqrt{nd}}$.
\end{proof}

\end{document}